\renewcommand{\emptyset}{\varnothing}
\renewcommand{\le}{\leqslant}
\renewcommand{\ge}{\geqslant}
\newcommand{\ind}{\boldsymbol{1}}
\newcommand{\bsx}{\boldsymbol{x}}
\newcommand{\bse}{\boldsymbol{e}}
\newcommand{\bsy}{\boldsymbol{y}}
\newcommand{\bsz}{\boldsymbol{z}}
\newcommand{\real}{\mathbb{R}}
\newcommand{\cx}{\mathcal{X}}
\newcommand{\bszero}{\boldsymbol{0}}
\newcommand{\bsone}{\boldsymbol{1}}
\newcommand{\tran}{\mathsf{T}}
\newcommand{\e}{\mathbb{E}}
\newcommand{\rd}{\,\mathrm{d}}
\newcommand{\phm}{\phantom{-}}
\newcommand{\auc}{\mathrm{AUC}}
\newcommand{\abc}{\mathrm{ABC}}
\newcommand{\aul}{\mathrm{AUL}}
\newcommand{\giv}{\!\mid\!}
\newcommand{\phe}{\phantom{=}}
\newtheorem{proposition}{Proposition}
\newtheorem{lemma}{Lemma}
\newtheorem{theorem}{Theorem}
\begin{document}

\title{Deletion and Insertion Tests in Regression Models}
\date{}
\author[1]{Naofumi Hama}
\author[1]{Masayoshi Mase}
\author[2]{Art B. Owen}
\affil[1]{Hitachi, Ltd.\\Research \& Development Group}
\affil[2]{Department of Statistics\\Stanford University}
\affil[ ]{\textit {naofumi.hama.hd@hitachi.com, masayoshi.mase.mh@hitachi.com, owen@stanford.edu}}

\maketitle

\begin{abstract}
    A basic task in explainable AI (XAI) is to identify the most important features behind a prediction made by a black box function $f$.  The insertion and deletion tests of \cite{petsiuk2018rise} can be used to judge the quality of algorithms that rank pixels from most to least important for a classification.  Motivated by regression problems we establish a formula for their area under the curve (AUC) criteria in terms of certain main effects and interactions in an anchored decomposition of $f$.  We find an expression for the expected value of the AUC under a random ordering of inputs to $f$ and propose an alternative area above a straight line for the regression setting.  We use this criterion to compare feature importances computed by integrated gradients (IG) to those computed by Kernel SHAP (KS) as well as LIME, DeepLIFT, vanilla gradient and input$\times$gradient methods. KS has the best overall performance in two datasets we consider but it is very expensive to compute.
    %Exact computation of KS grows exponentially with dimension, while that of IG grows linearly with dimension.  In two data sets including binary variables we find that KS is superior to IG in insertion and deletion tests, but only by a very small amount.
    We find that IG is nearly as good as KS while being much faster.
    Our comparison problems include some binary inputs that pose a challenge to IG because it must use values between the possible variable levels and so we consider ways to handle binary variables in IG.
    We show that sorting variables by their Shapley value does not necessarily give the optimal ordering for an insertion-deletion test. It will however do that for monotone functions of additive models, such as logistic regression.
    % We show that IG will match KS when $f$ is an additive function plus a multilinear function of the variables. This includes a  multilinear interpolation over the binary variables that would cause IG to have exponential cost in a naive implementation.
\end{abstract}

\section{Introduction}

Explainable AI methods are used help humans
learn from patterns that a machine learning or artificial intelligence model has found, or to
judge whether those patterns are scientifically
reasonable or whether they treat subjects fairly.
As \cite{hooker2018benchmark} note, there is no
ground truth for explanations. \cite{mase2022variable} attribute this to the greater difficulty of identifying causes of effects
compared to effects of causes \citep{dawi:musi:2021}.
Lacking a ground truth, researchers turn to axioms
and sanity checks to motivate and vet explanatory methods.  There are also some numerical measures that one can use to compute a quality measure for methods that rank variables from most to least important.  These include the Area Over Perturbation Curve (AOPC) of \cite{samek2016evaluating} and the Area Under the Curve (AUC) measure of \cite{petsiuk2018rise} that we focus on.
They have the potential to augment intuitive and philosophical distinctions among methods with precise numerical comparisons. In this
paper we make a careful study of the properties
of those measures and we illustrate their use on two datasets.

Insertion and deletion tests were used by \cite{petsiuk2018rise} to compare variable importance methods for black box functions.  In their specific case they had an image classifier that would, for example, conclude with high confidence that a given image contains a mountain bike. Then the question of interest was to identify which pixels are most important to that decision.  They propose to delete pixels, replacing them by a plain default value
%(the average of all pixels from many images)
(constant values such as black or the average of all pixels from many images)
in order from most to least important for the decision that the image was of the given class.  If they have ordered the pixels well, then the confidence level of the predicted class should drop quickly as more pixels are deleted. By that measure, their Randomized Input Sampling for Explanation (RISE) performed well compared to alternatives such as GradCAM \citep{selvaraju2017grad} and LIME \citep{ribeiro2016should} when explaining outputs of a ResNet50 classifier \citep{zhang2018top}.  For instance, Figure 2 of \cite{petsiuk2018rise} has an example where occlusion of about 4\% of pixels, as sorted by their RISE criterion can overturn the classification of an image.
They also considered an insertion test starting from
a blurred image of the original one
%a blank image
and inserting pixels from the real image in order from most to least important. An ordering where the confidence rises most quickly is then to be preferred.
\cite{petsiuk2018rise} scored their methods by an area under the curve (AUC) metric that we will describe in detail below. The idea to change features in order of importance and score how quickly predictions change is quite natural, and we expect many others have used it. We believe that our analysis of the methods in \cite{petsiuk2018rise} will shed light on other similar proposals.

Figure~\ref{fig:cub} shows an example where an image is correctly and confidently classified as an albatross by an algorithm described in Appendix~\ref{sec:imageexample}.  The integrated gradients (IG) method of \cite{sundararajan2017axiomatic} that we define below can be used to rank the pixels by importance.  In this instance the deletion AUC is $0.27$ which can be interpreted as meaning that about 27\% of the pixels
have to be deleted before the algorithm completely forgets that the image is of an albatross.
The model we used accepts square images of size $224\times224$ pixels with 3 color channels. As a preprocessing step, we cropped the leftmost image in Figure~\ref{fig:cub} to its central $224\times224$ pixels.
%with square shape whose pixel size is $3\times 224\times 224$ then we resize and crop the image in most left panel in Figure~\ref{fig:cub} at the center to make it square image with proper size in the preprocessing.
The IG feature attributions from each pixel (summed over red, green and blue channels) of this square image are presented in the rightmost panel of Figure~\ref{fig:cub}.
A saliency map shows that pixels in the bird's face, especially the eye and beak are rated as most important.

    {In this paper we study insertion and deletion metrics for uses that include regression problems in addition to classification.} We consider a function $f(\bsx)$ such as an estimate of a response $y$ given $n$ predictors represented as components of $\bsx$.  The regression context is different from classification. The trajectory taken by $f(\cdot)$ as inputs are switched one by one from a point $\bsx$ to a baseline point $\bsx'$ can be much less monotone than in the image classification problems that motivated \cite{petsiuk2018rise}.  We don't generally find that either $f(\bsx)$ or $f(\bsx')$ is near zero.  There are also use cases where $\bsx$ and $\bsx'$ are both actual observations; it is not necessary for one of them to be an analogue of a completely gray image or otherwise null data value. Sometimes $f(\bsx)\approx f(\bsx')$ and yet it can still be interesting to understand what happens to $f$ when some components of $\bsx$ are changed to corresponding values of $\bsx'$.

Our main contributions are as follows.
Despite these differences between classification and regression, we find that insertion and deletion metrics can be naturally extended to regression problems. We then develop expressions for the resulting AUC in terms of certain main effects and interactions in $f$ building on the anchored decomposition from \cite{kuo:sloa:wasi:wozn:2010} and others. This anchored decomposition is a counterpart to the better known analysis of variance (ANOVA) decomposition. The anchored decomposition does not require a distribution on its inputs, much less the independence of those inputs that the ANOVA requires.  In the regression context we prefer to change the AUC computation replacing the horizontal axis by a straight line connecting $f(\bsx)$ to $f(\bsx')$.  We obtain an expression for the average AUC in a case where variables were inserted in a uniform random order over all possible permutations.  In settings without interactions the area between the variable change curve (that we define below) and the straight line has expected value zero under those permutations, but interactions change this. We also show that the expected area between the insertion curve and an analogous deletion curve that we define below does have expected value zero, even in the presence of interactions of any order. Some other contributions described below show that in some widely used models the same ordering that optimizes an area criterion also optimizes a Shapley value.

We take a special interest in the integrated gradients (IG) method of \cite{sundararajan2017axiomatic} because it is very fast.
The number of function or derivative evaluations that it requires grows only linearly in the number of input variables, for any fixed number of evaluation nodes in the Riemann sum it uses to approximate an integral.
The cost of exact computation for kernel SHAP (KS) of \cite{lundberg2017unified} grows exponentially with the number of variables, although it can be approximated by sampling.
We also include LIME of \cite{ribeiro2016should},
DeepLIFT of \cite{shrikumar2017learning},
Vanilla Grad of \cite{simonyan2013deep} and input times gradient method of \cite{shrikumar2016not}.
In the datasets we considered, KS is generally
best overall. We note that the term `Vanilla' is not used by \cite{simonyan2013deep} to describe their methods but it has been used by others, such as \cite{agarwal2022openxai}.

It is very common for machine learning
functions to include binary inputs.
For this reason we discuss how to extend IG to handle some dichotomous variables and then compare it to
the other methods, especially KS. Simply extending the domain of $f$ for such variables from $\{0,1\}$ to $[0,1]$ is easy to do and it avoids the exponential cost that some more principled
choices have.
%We find in some deletion and insertion tests that KS has better performance than IG but by a quite small amount that makes IG attractive for intensive explorations.

The remainder of this paper is organized as following.
Section~\ref{sec:back} cites related works, introduces some notation and places our paper in the context of explainable AI (XAI),
{while also citing some works
        that express misgivings about XAI.}
Section~\ref{sec:aucreg} defines the AUC and
gives an expression for it in terms of main effects
and interactions derived from an anchored decomposition
of the prediction function $f$. The expected AUC is obtained for a random ordering of variables.
We also introduce an area between the curves (ABC) quantity using a linear interpolation baseline curve instead of the horizontal axis.
    {We show that arranging input variables
        in decreasing order by Shapley value does not necessarily
        give the order that maximizes AUC, due to the
        presence of interactions. Models that, like logistic regression
        are represented by an increasing function of an additive
        model do get their greatest AUC from the Shapley ordering
        depsite the interactions introduced by the increasing function. When that function is differentiable, then IG finds the optimal order.
    }
Section~\ref{sec:ig4binary}
discusses how to extend IG to some dichotomous variables.
We consider three schemes: simply treating binary variables in $\{0,1\}$ as if they were continuous values in $[0,1]$,
multilinear interpolation of the function values
at binary points, and using paths that jump from $x_j=0$ to $x_j=1$.
The simple strategy of casting the
binary inputs to $[0,1]$, which many prediction functions can do, is preferable on grounds of speed.
%and, building on work of \cite{owen1972multilinear} and \cite{ward2020exploration} we show that IG matches Shapley value for functions that are additive plus multilinear. A naive implementation of a multilinear interpolation would have had exponential cost.
Section~\ref{sec:exp} presents some empirical work.  We choose a regression problem about explaining the value of a house in Bangalore using some data from Kaggle (Section~\ref{sec:expindia}).  This is a challenging prediction problem because the values are heavily skewed.  It is especially challenging for IG because all but two of the input variables are binary and IG is defined for continuous variables. The model we use is a multilayer perceptron.  We compare variable rankings from
six methods.
%KS to those from integrated gradients (IG) of \cite{sundararajan2017axiomatic}.
KS is overall best but IG is much faster
and nearly as good.
%The IG method is much faster than KS for this problem. Kernel SHAP gets better AUCs but only by a small amount.
Section~\ref{sec:cernexp} looks at a problem from high energy physics using data from CERN.
Section~\ref{sec:roarexample} includes
a ROAR analysis that compares how well the KS and IG measures rank the importance of variables in a
setting where the model is to be retrained without
its most important variables.
Section~\ref{sec:con} has some final comments.
Appendix~\ref{sec:modeldetails} gives some details of the data and models we study. Theorem~\ref{thm:aucfromdeltaf} is proved in Appendix~\ref{sec:auctheory}.

\section{Related Work}\label{sec:back}

The insertion and deletion measures we study are part of XAI.
Methods from machine learning and artificial intelligence are being deployed in electronic commerce, finance and other industries. Some of those applications are mission critical (e.g., in medicine, security or autonomous driving).  When models are selected based on their accuracy on holdout sets, the winning algorithms can be very complicated. There is then a strong need for human understanding of how the methods work in order to reason about their accuracy on future data.
For discussions on the motivations and methods for XAI
see recent surveys such as \cite{liao2021human}, \cite{saeed2021explainable} and \cite{bodria2021benchmarking}.

One of the most prominent XAI tasks is attribution in which one quantifies and compares importance of the model inputs to the resulting prediction.
Since there is no ground-truth for explanations, these attributions are usually compared based on theoretical justifications.
%theoretical justification is key ingredients as meaningful XAI algorithm.
From this viewpoint,
SHAP (SHapley Additive exPlanations) \citep{lundberg2017unified} and
IG \citep{sundararajan2017axiomatic}
are popular feature importance methods
due to their grounding in cooperative game theory.
%When some reasonable axioms on distributions
%from outputs to input features as contributions,
%game theory almost uniquely determines the feature attributions
Given some reasonable axioms, game theory can produce
unique variable importance measures
in terms of Shapley values and Aumann-Shapley values respectively.

As a complementary method to theoretical a priori justification, one can also examine the outputs of attribution methods numerically, either applying sanity checks or computing quantitative quality measures.  The insertion and deletion tests of \cite{petsiuk2018rise} that we study are of this type.
%Here we describe some of those numerical measures.
Quantitative metrics for XAI were recently surveyed  in \cite{nauta2022anecdotal} who also discuss metrics based on whether the importance ratings align with those of human judges.

%For this reason it is very difficult to evaluate various XAI methods
%in an unique metric.
%In recent years, several sanity checks
%\citep{adebayo2018sanity, adebayo2020debugging} for XAI algorithms
%which check whether they satisfy mandatory properties as XAI.
As an example of sanity checks, \citet{adebayo2018sanity} find that some algorithms to produce saliency maps are nearly unchanged when one scrambles the category labels on which the algorithm was trained, or when one replaces trained weights in a neural network by random weights.  They also find that some saliency maps are very close to what edge detectors would produce and therefore do not make much use of the predicted or actual class of an image.
%argue that a saliency map to show why an image got a given classification should be able to outperform an edge detector that can be defined without making any use of the predicted or actual image class.
%For instance, \citet{adebayo2018sanity} requires that saliency maps should randomized when weights of inner layers are randomized. Some of XAI algorithms which only pick edges of original images up as feateures to depict saliency maps fail to this test.
Another example from \citet{adebayo2020debugging} checks whether saliency mathods can detect spurious correlation in a setting where the backgrounds in training images are artificially correlated to output labels and the model learns this spurious correlation correctly.

One important method is the ROAR (RemOve And Retrain) approach of  \cite{hooker2018benchmark}.
It sequentially removes information from the least important columns in each observation in the training data
and retrains the model with these partially removed training data iteratively.
A good variable ranking will show rapidly decreasing performance of the classifier as the most important inputs are removed.
The obvious downside of this method is that it requires a lot of expensive retraining that insertion and deletion methods avoid.

\begin{figure}
    \includegraphics[width=\linewidth]{./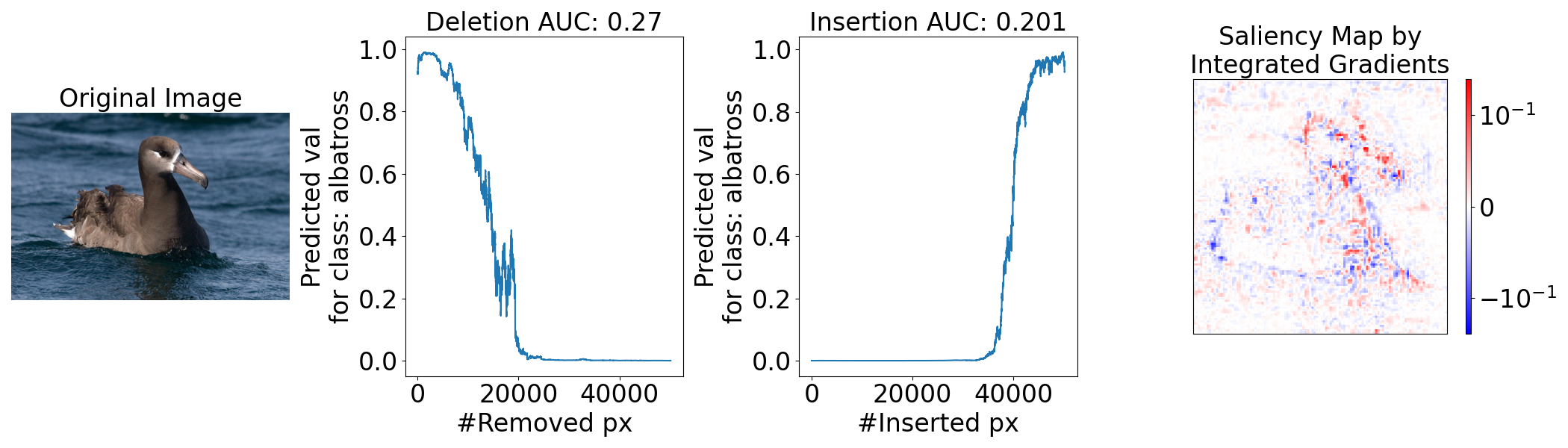}
    %{./img/tmp__black.png}
    \caption{An example of deletion and insertion tests in image classification for an image from \citet{wah2011caltech}.}
    \label{fig:cub}
\end{figure}

The insertion and deletion tests we study have been
criticized by \citet{gomez2022metrics} who note that
the synthesized images generated in these tests are unnatural
and do not resemble the images on which the algorithms
were trained.  The same issue of unnatural inputs was raised by
\cite{mase:owen:seil:2019}.
\cite{gomez2022metrics} also point out that
insertion and deletion tests only compare the rankings of the inputs.
%We have chosen to study insertion and deletion tests because they can avoid the prohibitive cost of retraining.

\citet{fel2021look} noted that scores on insertion tests can be strongly
influenced by the first few pixels inserted
into a background image.  They then note that images with only a few non-background pixels are quite different from the target image.
    {
        The policy choice of what baseline to
        compare an image to affects whether the
        result can be manipulated.
        An adversarily selected image might
        differ from a target image in only
        a few pixels, and yet have a very
        different classification.  At the same
        time both of these images differ greatly
        from a neutral background image.  An
        insertion test comparing the target image
        to a blurred background might show that
        the classification depends on many pixels,
        while a deletion test with the adversarial image
        will show that only a few pixels need to change.
        The two tests will thus disagree on whether
        the classification was influenced by few or
        many pixels.  Both are correct because they
        address different issues.
    }

A crucial choice in using insertion and deletion tests
is which reference input to use when deleting or inserting variables.  \cite{petsiuk2018rise} decided to insert real pixels into a blurred image instead of inserting them into a solid gray image because, when the most important pixels form an ellipsoidal shape, then inserting them into a gray background might lead a spurious classification (such as `balloon'). On the other hand, deleting pixels via blurring might underestimate the salience of those pixels if the algorithm is good at inferring from blurred images. In our albatross example of Figure~\ref{fig:cub} we used an all black image with $224\times224\times3$ zeros. We call such choices `policies' and note that a good policy choice depends on the scientific goals of the study and the strengths and weaknesses of the algorithm under study.

\cite{haug2021baselines} study different kinds of baseline images for image classification problems noting that the choice of background affects how well a method performs.  Among the backgrounds they mention are constants, blurred images, uniform or Gaussian noise, average images and baseline images maximally distant from the target image. They also mention the neutral backgrounds of \cite{izzo2020baseline} that lie on a decision boundary. \cite{sundararajan2020many} discuss taking every available data point as a baseline and averaging the resulting Shapley values. \cite{sturmfels2020visualizing} note the importance of choosing baselines carefully, pointing out that zero would be a bad baseline for blood sugar and that if a solid color image is used as a baseline in image classification, then the result cannot attribute importance to pixels of that color.  They discuss many of the baselines that \cite{haug2021baselines} do and they propose the `farthest image' baseline.

There are also broader criticisms of XAI methods.  \cite{kuma:2020} point out some difficulties in formulating a variable importance problem as a game suitable for use in Shapley value. Their view is that those explanations do not match what people expect from an explanation. They also identify a catch-22 where either allowing or disallowing a variable not used by a model to be important causes difficulties. \cite{rudin2019stop} says that one should not use a black box model to make high stakes decisions but should instead only use interpretable models. She also disputes that this would necessarily cause a loss in accuracy. We see a lot of value in that view, and we know that there are examples where interpretable models perform essentially as well as the best black boxes.  However black boxes are very widely used. There is then value in XAI methods that can reveal and quantify any of their flaws.  Furthermore, an explanation depends on not just the form of the model but also on the joint distribution of the predictor variables. For example an interpretable model that does not use the race of a subject might still have a discriminatory impact due to associations among the predictors and XAI methods can be used to evaluate such bias.

Prior uses of insertion and deletion tests have mostly been about image or text classification. There have been a few papers using them for tabular data or time series data, such as
\cite{cai2021xproax}, \cite{hsieh2020evaluations}, \cite{parvatharaju2021learning},
and \cite{ismail2020benchmarking}.

\cite{ancona2017towards} also describe a strategy of changing variables one at a time and observing how the quality of a prediction changes in response independently of \cite{petsiuk2018rise}. Their Figure 3c compares the trajectories taken by several different methods on some image classification problems. They have insertion and deletion curves to compare an occlusion method to integrated gradients.  Unlike \cite{petsiuk2018rise}, they do not report an AUC quantity.

The work of \cite{samek2016evaluating} precedes \cite{petsiuk2018rise} and uses the same global ablation strategy of deleting information in order from most to least important. Their deletions involve replacing a whole block of pixels (e.g., a $9\times 9$ block) by uniformly distributed noise. One motivation for using such noise was to generate images outside the manifold of natural images.  Their average over a perturbation curve is comparable to the average under the deletion curve of \cite{petsiuk2018rise}.  Where \cite{petsiuk2018rise} focus on curves for individual images, \cite{samek2016evaluating} study the average of such curves over many images.  In their numerical work they only make the first 100 such perturbations, affecting about $1/6$ of the pixels.

For us the advantage of the approach in \cite{petsiuk2018rise} is that their AUC is defined by running the variable substitution to completion, changing every feature $x_j$ to the baseline value $x'_j$.  Then we use two orders, one that seeks to increase $f$ as fast as possible and one that seeks to decrease it as fast as possible, and study the area between those two curves.

\section{The AUC for Regression}\label{sec:aucreg}

The AUC method for comparing variable rankings has not had
much theoretical analysis yet.  This section develops some
of its properties.  The development is quite
technical in places.  We begin with a non technical account
emphasizing intuition.
Readers may use the intuitive development as orientation
to the technical parts or they may prefer to skip the
technical parts.

If we order the inputs to a function $f$ and then
change them one at a time from the value in point $\bsx$
to that in a baseline value $\bsx'$,
the resulting function
values trace out a curve over the interval $[0,n]$.
If we have tried to order the variables starting
with those that will most increase $f$ and ending
with those that least increase (most decrease) $f$
then a better ordering is one that gives a larger
area under the curve (AUC).
We will find it useful to consider the
signed area under this curve but above a straight line
connecting the end points. This area between the curves
(ABC) is the original AUC minus the area under the line segment.

A deletion measure orders the variables from those
thought to be most decreasing of $f$ to those that
are least decreasing, i.e., the opposite order to an
insertion test.  For deletion we like to use the
area ABC below the straight line but above the
curve that the deletion process traces out.
When we need to refer to insertion and deletion
ABCs in the same expression we use ABC$^\prime$
for the deletion case.

Additive functions are of special interest
because additivity simplifies explanation
and such models often come close to
the full predictive power of a more
complicated model.
For an additive model, the incremental change in $f$ from
replacing $x_j$ by $x'_j$, call it $\Delta_j$,
does not depend on $x_k$ for $k\ne j$.
In this case the AUC is maximized by ordering
the variables so that $\Delta_1\ge \Delta_2\ge\cdots\ge\Delta_n$.
In this case the Shapley values are $\phi_j=\Delta_j$
and so ordering by Shapley value maximizes both
AUC and ABC.
We also show that if one orders the predictors randomly,
then the expected value of ABC under this randomization
is zero for an additive function.

Prediction functions must also capture interactions among the
input variables.  We study those using some
higher order differences of differences. This way of quantifying
interactions comes from an anchored decomposition that
we present.  This anchored decomposition is a less well known
alternative to the analysis of variance (ANOVA) decomposition.
The interaction quantity for a set $u\subseteq\{1,2,\dots,n\}$
of variables is denoted $\Delta_u$.  This interaction does not
contribute to any points along the curve until the `last' member
of the set $u$, denoted $\lceil u\rceil$, has been changed.
It is thus present only in the final $n+1-\lceil u\rceil$ points
of the insertion curve.  A large AUC comes not just from
bringing the large main effects to the front of the list.  It also
helps to have all elements in a positive interaction $\Delta_u$
included early, and at least one element in a negative interaction
$\Delta_u$ appear very late in the ordering.

When there are interactions present, it is no longer true that
$\e(ABC)$ must be zero under random ordering.
We show in Appendix \ref{sec:abc4del} that the area
between the insertion and deletion curves $\abc+\abc'$
satisfies $\e(\abc+\abc')=0$ under random ordering of inputs
whether or not interactions are present.

Section~\ref{sec:aucvsshapley} shows that if we sort variables in
decreasing order of their Shapley values, then we do not
necessarily get the ordering that maximizes the AUC.
This is natural: the Shapley value $\phi_j$ of variable
$j$ is defined as a weighted sum of $2^{n-1}$ incremental
values for changing $x_j$, while the AUC, uses only one
of those incremental values for variable $j$.  The anchored
decomposition that we present below makes it simple to
construct an example with $n=3$ where the Shapley values
are $\phi_1>\phi_2>\phi_3$ while the order
$(1,3,2)$ has greater AUC than the order $(1,2,3)$.
This is not to say that insertion AUCs are somehow
in error for not being optimized by the Shapley ordering,
nor that Shapley value is in error for not optimizing
the AUC.  The two measures have different definitions
and interpretations.  They can reasonably be considered
proxies for each other, but the Shapley value weights
a variable's interactions in a different way than
the AUC does.

In Appendix~\ref{sec:monotonicity} we consider the logistic
regression model
$f(\bsx)=\Pr( Y=1\giv\bsx)=(1+\exp(-\beta_0-\bsx^\tran\beta))^{-1}$.
Because of the curvature of the logistic transformation,
this function has interactions of all orders.
At the same time $\tilde f(\bsx)=\log(f(\bsx)/(1-f(\bsx)))=
    \beta_0+\bsx^\tran\beta$
is additive so on this scale the Shapley ordering
does maximize AUC.  The AUC on the original probability
scale is perhaps the more interpretable choice.
We show that due to the monotonicity of
the logistic transformation, the Shapley ordering
for $f(\bsx)$ is the same as for $\tilde f(\bsx)$
and so it also maximizes the $\auc$ for $f(\bsx)=\Pr(Y=1\giv\bsx)$.
Because $\exp(\cdot)$ is strictly monotone the Shapley
ordering also optimizes AUC for loglinear models and
for naive Bayes. It is also shown there that
for a differentiable increasing function of an
additive function that integrated gradients will
compute the optimal order.

The next subsections present the above findings
in more detail.  Some of the derivations are
in an appendix.
We use well known properties of the Shapley value.
Those are discussed in many places.  We can
recommend
the recent reference by
\cite{plischke2021computing} because it also
discusses Harsanyi dividends and is motivated
by variable importance.

\subsection{ABC Notation}
We study an algorithm $f:\cx\to\real$ that makes a
prediction based on input data $\bsx\in\cx = \prod_{j=1}^n\cx_j$.
The points $\bsx\in\cx$ are written $\bsx=(x_1,x_2,\dots,x_n)$. In most applications $\cx_j\subseteq\real$. While some attribution methods require real-valued features, the AUC quantity we present does not require it. For classification, $f$ could be the estimated probability that a data point with features $\bsx$ belongs to class $y$, or it could be that same probability prior to a softmax normalization.  Our emphasis is on regression problems.

The set of variable indices is $1{:}n\equiv\{1,2,\dots,n\}$.
For any $u\subseteq1{:}n$ we write $\bsx_u$ for
the components $x_j$ that have $j\in u$.
We write $-u$ for $1{:}n\setminus u$.
We often need to merge indices from two
or more points into one hybrid point.  For this
$\bsx_u{:}\bsx'_{-u}$ is the point $\tilde\bsx\in\cx$
with $\tilde x_j=x_j$ for $j\in u$ and $\tilde x_j=x'_j$
for $j\not\in u$.  That is, the parts of $\bsx$ and $\bsx'$
have been properly assembled in such a way that we can pass
the hybrid to $f$ getting $f(\tilde \bsx)$.
More generally for disjoint $u,v,w$ with $u\cup v\cup w=1{:}n$
the point $\bsx_u{:}\bsy_v{:}\bsz_w$ has components $x_j$, $y_j$ and $z_j$ for $j$ in $u$, $v$ and $w$ respectively.

The cardinality of $u$ is denoted $|u|$.
We also write $\lceil u\rceil = \max\{j\in1{:}n\mid j\in u\}$
with $\lceil \emptyset\rceil =0$ by convention.
It is typographically convenient to shorten the
singleton $\{j\}$ to just $j$ where it could only
represent a set and not an integer, especially within subscripts.

Suppose that we have two points $\bsx,\bsx'\in \cx$
and are given a method to attribute the difference $f(\bsx')-f(\bsx)$ to the variables $j\in1{:}n$.
This method produces attribution values $A_f(\bsx,\bsx')\in\real^n$ with the interpretation that
$A_f(\bsx,\bsx')_j$ is a measure of the effect on
$f$ of changing $x_j$ to $x'_j$.
We can then sort the variables
$j\in1{:}n$ according to their attribution values $A_f(\bsx,\bsx')_j$.
In an insertion test we insert the variables from $\bsx'$
into $\bsx$ in order from ones thought to most increase
$f(\cdot)$ (i.e., largest $A_f(\bsx,\bsx')_j$) to ones thought to most decrease $f(\cdot)$ (smallest $A_f(\bsx,\bsx')_j$).
Let the constructed points be $\tilde\bsx^{(j)}$ for $j=0,1,\dots,n$
with $\tilde\bsx^{(0)}=\bsx$ and  $\tilde\bsx^{(n)}=\bsx'$.
If we have chosen a good order there will be a large area
under the curve $(j,f(\tilde\bsx^{(j)}))$ for $j=0,1,\dots,n$
and consequently also a large (signed) area between
that curve and a straight line connecting its endpoints.
The left panel in Figure \ref{fig:sign_auc} illustrates  ABC for insertion.

In a deletion measure we order the variables from the ones thought
to have the most negative effect on $f$ to the ones thought
to have the most positive effect.  Those variables are changed from $x_j$ to $x_j'$ in that
order and a good ordering creates a curve with a small area
under it.  We keep score by using the signed area above that
curve but below the straight line connecting $f(\bsx)$ to $f(\bsx')$.
Note that we are still inserting variables from
$\bsx'$ into $\bsx$ but, in an analogy to what happens in images
we are deleting the information that we think would make $f$ large,
which in that setting made the algorithm confident about what
was in the image.
Let $\check\bsx^{(j)}$ be the point we get after placing
the $j$ elements of $\bsx'$ thought to most decrease $f$
into $\bsx$.
Our ABC criterion for deletion is the signed area above
the curve $(j,f(\check\bsx^{(j)}))$ but below the straight
line connecting connecting the endpoints.
The right panel in Figure \ref{fig:sign_auc} illustrates  ABC for deletion.

We also considered taking insertion to mean replacing components $x_j$ by $x'_j$ in increasing order of predicted change to $f$ when $f(\bsx')>f(\bsx)$ and taking deletion to mean replacement
starting with the most negative changes when $f(\bsx')<f(\bsx)$.  This convention may seem like a natural
extension of the uses in image classification, but it has two difficulties for regression. First, it is not well defined when $f(\bsx)=f(\bsx')$. Second, while this exact equality might seldom hold, that definition makes cases $f(\bsx')=f(\bsx)+\epsilon$ very different from those with $f(\bsx') = f(\bsx)-\epsilon$, for small $\epsilon>0$.

\begin{figure}[t]
    \centering
    \includegraphics[width=\linewidth]{./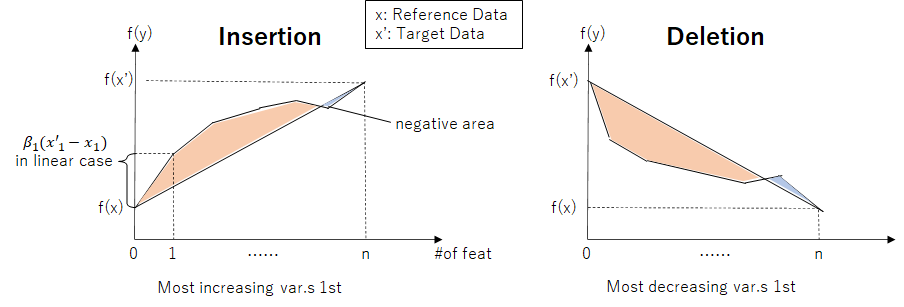}
    \caption{
        The left panel shows the (signed) area between
        a straight line and
        the curve formed by changing components of $\bsx'$
        to those of $\bsx$ after ordering by their
        estimated positive impact on $f$. The right
        panel shows a signed area for deletion of
        variables.
    }
    \label{fig:sign_auc}
\end{figure}

When $\bsx$ and $\bsx'$ are two randomly chosen
data points there is a natural symmetry between
insertion and deletion.  In many settings however,
one of the points $\bsx$ or $\bsx'$ is not an
actual observation but is instead a reference value
such as the gray images discussed above.
As mentioned above, choices of $\bsx$ and $\bsx'$
to pair with each other are called policies.
%We conducted the experiments in real-world dataset
%which compares the relations of AUCs of two tests
%in various policies for reference data in Section
Section~\ref{sec:cernexp} has some example policies
on our illustrative data.
We include a counterfactual policy
with motivation similar to counterfactual XAI.
The relation between counterfactual XAI and choice of
background data are
also discussed in \citet{albini2021counterfactual} in detail.

A formal description of the curve
is as follows.
For a permutation
$(\pi(1),\pi(2),\dots,\pi(n))$ of $(1,2,\dots,n)$,
and $1\le j\le n$
define $\Pi(j)=\{\pi(1),\dots,\pi(j)\}$ with $\Pi(0)=\emptyset$.
Now let $\tilde \bsx^{(j)} = \bsx'_{\Pi(j)}{:}\bsx_{-\Pi(j)}$, that is
$$
    \tilde\bsx^{(j)}_k =
    \begin{cases}
        \bsx'_k, & k\in \Pi(j)  \\
        \bsx_k,  & \text{else.}
    \end{cases}
$$

For our theoretical study it is convenient to define
\begin{align}\label{eq:defauc}
    \auc = \sum_{j=0}^n f(\tilde\bsx^{(j)}).
\end{align}
If we connect the points $(j,f(\tilde\bsx^{(j)})$ by line segments
then the area we get is a sum of trapezoidal areas
$$
    \sum_{j=1}^n \frac12\bigl( f(\tilde\bsx^{(j-1)})
    +f(\tilde\bsx^{(j)})\bigr)
    =
    \auc -\frac12\bigl(
    f(\tilde\bsx^{(0)})+f(\tilde\bsx^{(n)})\bigr).
$$
The difference between this trapezoidal area and~\eqref{eq:defauc} is unaffected by the ordering permutation $\pi$
because $\tilde\bsx^{(0)}=\bsx$ and $\tilde\bsx^{(n)}=\bsx'$ are invariant
to the permutation.
One could similarly omit either $j=0$ or $j=n$
(or both) from the sum in~\eqref{eq:defauc}
without changing the difference between areas
attributed to any two permutations.

Our primary measure is the area below
the curve but above a straight line from
$(0,f(\tilde\bsx^{(0)}))$ to $(n,f(\tilde\bsx^{(n)}))$.
It is the (signed) area between those curves, that is
\begin{align}\label{eq:abc}
    \abc & = \auc - \aul,\quad\text{where} \\
    \aul & =
    \frac{n+1}2\bigl( f(\tilde\bsx^{(0)}) +f(\tilde\bsx^{(n)})\bigr)
    =\frac{n+1}2\bigl( f(\bsx) +f(\bsx')\bigr)
    \notag
\end{align}
is a measure of the area under the straight line
connecting $(0,f(\bsx))$ to $(n,f(\bsx'))$
compatible with our AUC formula from~\eqref{eq:defauc}.
The difference between ABC and AUC is also unaffected by the ordering of variables.
The AUC and ABC going from $\bsx$ to $\bsx'$ is the same as
that from $\bsx'$ to $\bsx$.  That is, it only depends on the
two selected points. The reason for this is that
$\tilde\bsx^{(k)}$ going from $\bsx'$ to $\bsx$ equals
$\tilde\bsx^{(n-k)}$ when we go from $\bsx$ to $\bsx'$.
For the same reason the deletion areas are
the same in both directions, but generally not
equal to their insertion counterparts.

Both AUC and ABC have the same units that $f$ has.
Then for instance if $f$ is measured in dollars
then $\abc/n$ is in dollars explained per feature.
This normalization is different from that of \cite{petsiuk2018rise} whose curve is over the interval $[0,1]$ instead of $[0,n]$ and whose AUC is then interpreted in terms of a proportion of pixels.

\subsection{Additive Functions}
The AUC measurements above are straightforward
to interpret when $f(\bsx)$ takes the
additive form
$f_\emptyset+\sum_{j=1}^nf_j(x_j)$
for a constant $f_\emptyset$ and
functions $f_j:\cx_j\to\real$.
We then easily find that
\begin{align}
    \auc & = (n+1)f(\bsx) + \sum_{j=1}^n(n-j+1)\bigl( f_j(x'_j)-f_j(x_j)\bigr),\quad\text{and}\label{eq:aucformulaadd} \\
    \abc & = \sum_{j=1}^n\Bigl(\frac{n+1}2-j\Bigr)\bigl( f_j(x'_j)-f_j(x_j)\bigr).\label{eq:abcformulaadd}
\end{align}
The best ordering is, unsurprisingly, the one that
sorts $j$ in decreasing values of $f_{j}(x_j)-f_j(x'_j)$.
If $f$ is additive then the insertion and deletion ABCs
are the same.
Also, the Shapley value for variable $j$ is proportional
to $f_j(x'_j)-f_j(x_j)$ and so ordering variables by
decreasing Shapley value maximizes the AUC and ABC.

\subsection{Interactions}
The effect of interactions is more complicated,
but we only need interactions involving points
$\bsx$ and $\bsx'$.  We define the differences and
iterated differences of differences that we need via
\begin{align*}
    \Delta_j & = \Delta_j(\bsx,\bsx',f)= f(\bsx'_j{:}\bsx_{-j})-f(\bsx),\quad\text{and} \\
    \Delta_u & = \Delta_u(\bsx,\bsx',f) = \sum_{v\subseteq u}(-1)^{|u-v|}
    f(\bsx'_v{:}\bsx_{-v})
\end{align*}
for $u\subseteq1{:}d$ with $\Delta_\emptyset = f(\bsx)$
corresponding to no differencing at all.
From Theorem~\ref{thm:aucfromdeltaf} of Appendix~\ref{sec:auctheory} we get
\begin{align*}
    \auc & = \sum_{u\subseteq1{:}n}\bigl( n-\lceil u\rceil+1\bigr)\Delta_u.
\end{align*}
We can interpret this as follows: the interaction
for variables $u$
when represented as differences of differences,
takes effect in its entirety
once the last element $j$ of $u$ has been changed
from $x_j$ to $x'_j$.  It then contributes
to $n-\lceil u\rceil+1$ of the summands.
Thus, in addition to ordering the
main effects from largest to smallest,
the quality score for a permutation takes account of
where large positive and large negative interactions
are placed.

It is easy to see from~\eqref{eq:abcformulaadd}
that for an additive function and a uniformly
random permutation $\pi$ we have
$\e(\abc)=0$ because under such random
sampling the expected rank of variable $j$ is $(n+1)/2$.

Now suppose that we permute the variables
$1$ through $n$ into a random permutation $\pi$.
A fixed subset $u=(j_1,\dots,j_{|u|})$ is then mapped
to $\pi(u) = (\pi(j_1),\dots,\pi(j_{|u|}))$.  Under this randomization
$$
    \e( \auc) = \sum_{u\subseteq1{:}n}
    \bigl( n -\e( \lceil \pi(u)\rceil)+1\bigr)\Delta_uf.
$$
The next proposition gives $\e(\lceil u\rceil)$.

\begin{proposition}\label{prop:meanceiling}
    For $n\ge1$ and $u\subseteq1{:}n$ let $\pi(u)$
    be a simple random sample of $|u|$ elements from $1{:}n$.
    Then
    \begin{align}\label{eq:meanceling}\e( \lceil \pi(u)\rceil) = \frac{|u|(n+1)}{|u|+1}.
    \end{align}
\end{proposition}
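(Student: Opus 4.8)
The plan is to compute the expectation directly from the distribution of the maximum $M = \lceil \pi(u)\rceil$ of a uniformly random $|u|$-subset of $1{:}n$. Write $k = |u|$ for brevity. A simple random sample of $k$ distinct indices is uniform over all $\binom{n}{k}$ subsets, and the image of a fixed $k$-set under a uniform random permutation has exactly this distribution, so the two framings agree. The maximum equals $m$ precisely when $m$ is selected together with some $k-1$ of the elements $\{1,\dots,m-1\}$, which gives
\begin{align*}
\Pr(M = m) = \frac{\binom{m-1}{k-1}}{\binom{n}{k}}, \qquad m = k, k+1, \dots, n.
\end{align*}
First I would note that these masses sum to one (by the hockey-stick identity applied to $\binom{m-1}{k-1}$) and then assemble $\e(M) = \sum_{m=k}^n m\Pr(M=m)$.

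The key algebraic step is the absorption identity $m\binom{m-1}{k-1} = k\binom{m}{k}$, which pulls the factor $k$ out of the sum and converts the summand into a single binomial coefficient:
\begin{align*}
\e(M) = \frac{1}{\binom{n}{k}}\sum_{m=k}^n m\binom{m-1}{k-1} = \frac{k}{\binom{n}{k}}\sum_{m=k}^n \binom{m}{k} = \frac{k}{\binom{n}{k}}\binom{n+1}{k+1},
\end{align*}
where the last equality is the hockey-stick identity $\sum_{m=k}^n \binom{m}{k} = \binom{n+1}{k+1}$. The final step is the one-line factorial cancellation $\binom{n+1}{k+1}\big/\binom{n}{k} = (n+1)/(k+1)$, which yields $\e(M) = k(n+1)/(k+1)$; substituting $k=|u|$ recovers~\eqref{eq:meanceling}.

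I do not anticipate a serious obstacle, since the whole argument is a short chain of standard binomial identities; the only thing needing care is recognizing the right combinatorial form of $\Pr(M=m)$ and selecting the absorption and hockey-stick identities that collapse the sum cleanly. As a sanity check I would verify the degenerate endpoints: when $k=n$ every index is chosen so $M\equiv n$, and the formula returns $n(n+1)/(n+1)=n$; when $k=1$ the sample is a single uniform index with mean $(n+1)/2$, matching $1\cdot(n+1)/2$. An alternative route via the tail-sum formula $\e(M)=\sum_{m\ge0}\Pr(M>m)$ with $\Pr(M\le m)=\binom{m}{k}/\binom{n}{k}$ reaches the same place and could serve as a cross-check.
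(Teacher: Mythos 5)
Your proof is correct and follows essentially the same route as the paper's: both reduce to the identity $m\binom{m-1}{k-1}=k\binom{m}{k}$ followed by the hockey-stick summation $\sum_{m=k}^n\binom{m}{k}=\binom{n+1}{k+1}$, the only cosmetic difference being that you obtain $\Pr(M=m)$ by direct counting while the paper differences the CDF $\Pr(M\le m)=\binom{m}{k}/\binom{n}{k}$. The only thing to add is the trivial remark that the case $|u|=0$ holds by the convention $\lceil\emptyset\rceil=0$.
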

\begin{proof}
    The result holds trivially for $|u|=0$, so we suppose
    that $|u|\ge 1$.
    For $k\in\{|u|,\dots,n\}$,
    $\Pr( \lceil\pi(u)\rceil \le  k)  =
        {{k\choose |u|}}/{{n\choose |u|}}$
    because there are ${n\choose |u|}$ equally
    probable ways to select the elements of $\pi(u)$
    and ${ k \choose |u|}$ of those have $\lceil \pi(u)\rceil \le k$.
    Subtracting $\Pr( \lceil\pi(u)\rceil \le k-1)$ we get
    $$\Pr( \lceil \pi(u)\rceil = k) =
        {{k-1\choose |u|-1}}/{{n\choose |u|}
        =\frac{|u|}k{k\choose |u|}}/{{n\choose |u|}}.$$
    Then using the hockey stick identity,
    \begin{align*}
        \e(\lceil \pi(u)\rceil)
         & = |u| {n\choose |u|}^{-1}
        \sum_{k=|u|}^n{k\choose |u|}
        = u {n+1\choose |u|+1}\bigm/{n\choose |u|}
        =\frac{|u|(n+1)}{|u|+1}.\quad\qedhere
    \end{align*}
\end{proof}

Next we work out the expected
value of $\auc$. Using the decomposition in Appendix~\ref{sec:abc4del}
we have
$$
    \aul = \frac{n+1}2\bigl( f(\bsx)+f(\bsx')\bigr)
    = (n+1)\Delta_\emptyset + \frac{n+1}2\sum_{u\ne\emptyset}\Delta_u.
$$
Then with Proposition~\ref{prop:meanceiling}
we find that
\begin{align*}
    \e(\abc) & =
    \e(\auc-\aul)                                           \\
             & = \sum_{u\ne\emptyset}
    \Bigl(\frac{n+1}2-\e(\lceil\pi(u)\rceil)\Bigr)\Delta_uf \\
             & = \sum_{u\ne\emptyset}
    \Bigl(\frac{n+1}2-\frac{|u|(n+1)}{|u|+1}\Bigr)\Delta_uf \\
             & = \frac{n+1}2
    \sum_{u\ne\emptyset}
    \frac{1-|u|}{|u|+1}
    \Delta_uf.
\end{align*}

As noted above, $\e(\abc)=0$ if $f$ has no interactions
because $\e(\lceil \{j\}\rceil)=(n+1)/2$,
but otherwise it need not be zero
because $\e(\lceil u\rceil)>(n+1)/2$ for $|u|>1$.
The contribution to $\abc$ from a given interaction
has the opposite sign of that interaction
because $|u|-1<0$ for $|u|\ge2$.
We show in Appendix~\ref{sec:abc4del} that
$\e(\abc+\abc')=0$ under random permutation of the
indices.

\subsection{AUC Versus Shapley}\label{sec:aucvsshapley}

If we order variables in decreasing order by Shapley value,
that does not necessarily maximize the AUC.  We can see this
in a simple setup for $n=3$ by constructing certain
values of $\Delta_u$.  We will exploit the delay $\lceil u\rceil$
with which an interaction gets `credited' to an AUC
to find our example.

Consider a setting with $n=3$ and $\Delta_1=3$, $\Delta_2=2$, $\Delta_3=1$, $\Delta_{\{1,2\}}=A$ to be chosen later
and all other $\Delta_u=0$. Because the $\Delta_u$ values are
also Harsanyi dividends \citep{hars:1959}, the Shapley value shares them equally among
their members.  Therefore
\begin{align*}
    \phi_1 = 3 + A/2,\quad\phi_2=2+A/2\quad\text{and}\quad\phi_3=1.
\end{align*}
So long as $A>-2$, the Shapley values are ordered
$\phi_1>\phi_2>\phi_3$. The $\auc$ for this ordering is
$$
    \auc((1,2,3)) = 3\Delta_1+2\Delta_2+\Delta_1+(3-\lceil\{1,2\}\rceil+1)A
    =14+2A.
$$
If we order the variables $(1,3,2)$ then the AUC is
$$
    \auc((1,3,2)) = 3\Delta_1+2\Delta_3+\Delta_2+(3-\lceil\{1,3\}\rceil+1)A
    =13 + A.
$$
As a result we see that if $-2<A<-1$, then
$$
    \phi_2>\phi_3\quad\text{but}\quad \auc((1,3,2))>\auc((1,2,3)).
$$
It is easy to show
that there can be no counterexamples for $n=2$.

\section{Incorporating Binary Features into Integrated Gradients}\label{sec:ig4binary}

IG avoids the exponential computational costs
that arise for Shapley value.
However, as defined
it is only available for variables with continuous
values.  Many problems have binary variables and so
we describe here some approaches to including them.

IG is based on the Aumann-Shapley value
from \cite{auma:shap:1974} who present
an axiomatic derivation for it.
We omit those axioms.
See \cite{sundararajan2017axiomatic} and \cite{lundberg2017unified} for the axioms
in a variable importance context.

We orient our discussion around
Figure~\ref{fig:category_shap} that
shows a setting with 3 variables.
Panel (a) shows a target data point that
differs from a baseline point in three
coordinates.
Panel (b) shows the diagonal path taken
by integrated gradients.  The gradient of
$f(\bsx)$ is integrated along that path
to get the IG attributions:
$$ A_f(\bsx,\bsx') \equiv \int_0^1
    \nabla f(\bsx+t(\bsx'-\bsx))\rd t\in\real^n.$$
If one of the variables is binary then one approach, shown
in panel (c) is to simply jump from one value to another at some intermediate point, such as the midpoint.  For differentiable $f$ the integral of the gradient along the line segment given by the jump would, by the fundamental theorem of calculus, be the difference between $f$ at the ends of that interval (times the Euclidean basis vector
$(0,\dots,0,1,0,\dots,0)$ corresponding to that variable). Such a difference is computable for binary variables even though the points on the path are ill defined. Finally, the vector of Shapley values is an average over $n!$ paths making jumps from baseline to target in all possible variable orders \citep{sundararajan2017axiomatic}. Panel (d) shows two of those paths.  For differentiable $f$ one could integrate gradients along those
paths as a way to compute the jumps, again by the
fundamental theorem of calculus, and then average
the path integrals.

Now suppose that we have $m>0$ binary variables
in a set $v\subset1{:}n$. Without loss of generality
suppose that $v=1{:}m$.
Then any data $\bsx$ and $\bsx'$ are in $\{0,1\}^m\times\real^{n-m}$
and for IG we need to consider arguments to $f$ in $\real^n$.
We consider three choices that we describe
in more detail below:
\begin{compactenum}[\quad\bf a)]
    \item Use the fitted $f$ as if the binary $x_j\in[0,1]$, for $j\in 1{:}m$.
    \item Replace $f$ by a multilinear interpolation
    $$g(\bsx) = \sum_{u\subseteq1{:}m}f(\bsone_u{:}\bszero_{1:m-u}{:}\bsx_{(m+1):n})\prod_{j\in u}x_j\prod_{j\in 1:m-u}(1-x_j)$$
    and compute the integrated gradients of $g$.
    \item Take paths that jump for binary $x_j$ as shown in Figure~\ref{fig:category_shap}(c).
\end{compactenum}
We call these choices, casting, interpolating and jumping, respectively.

Option {\bf a} is commonly available as many
machine learning models cast binary variables
to real values when fitting.
%It can be accompanied with mapping to continuous spaces by embedding layers.
Option {\bf b} interpolates: if $\bsx_{1:m}\in\{0,1\}^m$
then $g(\bsx)=f(\bsx)$.  \cite{sundararajan2017axiomatic} show that integrated gradients match Shapley values for functions
that are a sum of a multilinear interpolation like $g$
above plus a differentiable additive function.
Unfortunately, the cost of evaluating $g(\bsx)$ is $\Omega(2^m)$, which is exponential in
the number of binary inputs.
For option {\bf c} we have to choose where to make
the $m$ jumps.  For $m=1$ we would naturally jump half
way along the path
though there is not an axiomatic reason for that choice.
For $m>1$ we have to choose $m$ points on the curve
at which to jump. Even if we decide that all of those
jumps should be at the midpoint, we are left with $m!$
possible orders in which to make those jumps.  By symmetry
we might want to average over those orders but that produces
a cost which is exponential in $m$.

Based on the above considerations we think that
the best way to apply IG to binary variables
is also the simplest.  We cast the corresponding
booleans to floats.

\begin{figure}[t]
    \centering
    \includegraphics[width=\linewidth]{./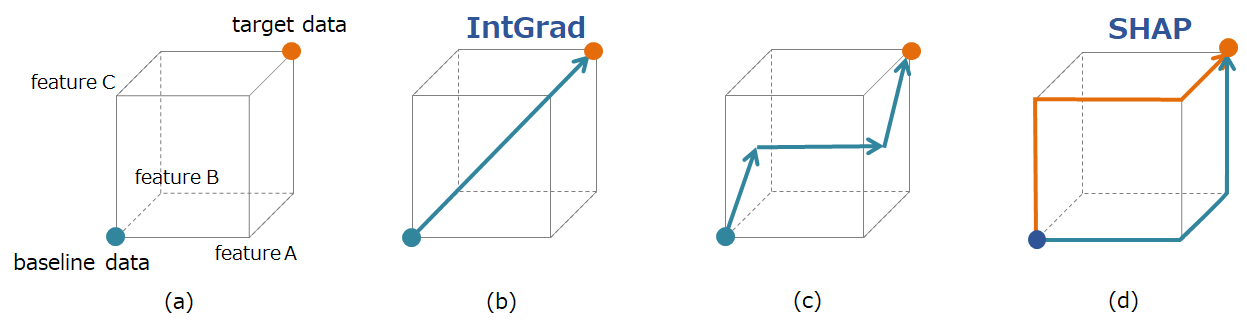}
    \caption{(a): $\bsx$ and $\bsx'$ placed in feature space represented as a cube,
        (b): usual Integrated Gradients with a straight line path connecting $x$ and $x'$,
        (c): Integrating along a path with one jump,
        % as shown in (\ref{eq:pathpartial}),
        (d): Shapley value expressed as an average of path integrals over $n!$ paths.}
    \label{fig:category_shap}
\end{figure}

\section{Experimental Results}
\label{sec:exp}

In this section we illustrate insertion and deletion
tests for regression.  We compare
our variance importance measures
on two tabular datasets.  The first one is about predicting the
value of houses in India.  It has mostly binary
predictors and two continuous ones.
The second dataset is from CERN and it computes
the invariant mass produced from some electron
collisions.

The methods we compare are Kernel SHAP \citep{lundberg2017unified},
integrated gradients \citep{sundararajan2017axiomatic},
DeepLIFT \citep{shrikumar2017learning},
Vanilla Grad \citep{simonyan2013deep},
Input$\times$Gradient \citep{shrikumar2016not}
and LIME \citep{ribeiro2016should}.
\cite{ancona2017towards} make a detailed study of backpropagation-based gradient methods including all of the above ones (this excludes LIME) as well as layer-wise relevance propagation (LRP) of \cite{bach2015pixel} that we did not include in our computations.

% \subsection{Dataset and Model}
\subsection{Bangalore Housing Data}
\label{sec:expindia}

The dataset we use lists the value in Indian rupees (INR) for houses in India. The data are from Kaggle at this URL:\\
%\url{https://www.kaggle.com/ruchi798/housing-prices-in-metropolitan-areas-of-india}.
\url{www.kaggle.com/ruchi798/housing-prices-in-metropolitan-areas-of-india}.\\
We use 38 of the 39 other columns in this dataset
(excluding the ``Location" column that contains various place names
for simplicity),
and we treat the ``Area" and ``No.\ of Bedrooms" columns as continuous variables.

We use only the data from Bangalore (6,207 records).  Most of those data points were missing almost every predictor.  We use only 1,591 complete data points. We normalized the output value by dividing by 10,000,000 INR. We centered the continuous variables at their means and then divided them by their standard deviations.
We selected 80\% of the data points at random to train a multilayer perceptron (MLP).
The hyperparameters such as number of layers and ratio of dropouts
are determined from a search
described in Appendix~\ref{sec:housingprices}.

For the 20\% of points that were held out (391 observations) we computed
the ABC from~\eqref{eq:abc} using
our collection of variable importance methods.
We also included
a random variable ordering as a check.
For each of those points $\bsx$ we made a careful selection of
a reference point $\bsx'$ from holdout points as follows:
\begin{compactitem}
    \item the point $\bsx'$ had to differ from $\bsx$ in at least $12$ features,
    \item it had to be among the smallest $20$ such values of $\Vert\bsx-\cdot\Vert$, and
    %\item among those $20$ it had to have the greatest response difference $|y'-y|$.
    \item among those $20$ it had to have the greatest response difference $|f(\bsx)-f(\bsx')|$.
\end{compactitem}
Having numerous different features makes the attribution problem more challenging.  Having $\Vert\bsx-\bsx'\Vert$ small brings less exposure to
the problems of unrealistic data hybrids.  Finally, having large %$|y-y'|$
$|f(\bsx)-f(\bsx')|$, the absolute value of the sum of feature attributions for XAI algorithms with the completeness axiom,
identifies data pairs in most need of an attribution.  Despite having close feature vectors those pairs have quite different predicted responses.

Our implementation of KS used 120,000 samples.
Our implementation of IG used a Riemann sum on 500
points to approximate the line integrals.
The hyperparameters for other XAI methods are
summarized in Appendix~\ref{sec:otherxais}.

The ABCs and their differences are summarized in Table \ref{tb:diffABC}.
There we see numerically that KS was best
for the insertion ABC and IG was second best.
For the deletion measure it was essentially
a three way tie for best among KS, IG and DeepLift.

The simple gradient based methods were disappointing.
%Gradients do not capture the effect of nonlinearities or interactions, and so we can take their poor relative performance as evidence that the model includes meaningful nonlinearity.
%% ABO: the gradient methods we use have other flaws, so their difficulties do not make strong evidence for nonlinearities and interactions.  The gradient at the midpoint or the average of gradients at the endpoints would be stronger.
In particular, vanilla grad was worse than random. We note that vanilla grad uses a default variable scaling.  We used the standard deviation of each input while another choice is to scale each variable to the interval $[0,1]$.  Neither of these choices use the specific baseline-target pair and this could cause poor performance.

The difference between KS and IG was not
very large.
Thus even in this setting where there are lots of binary
predictors,
IG was able to closely mimic KS.
%Also, both methods were
%much better than random which had a nearly zero average ABC.
%The expected ABC would be exactly zero (by symmetry)
%if $\bsx$ and $\bsx'$ were selected by IID sampling but our counterfactual sampling policy was not IID.
We see in Table~\ref{tb:diffABC} that
insertion ABCs are on average higher than deletion ABCs
for this policy.
    %In addition to $\bsx'$ being chosen by a counterfactual policy centered on $\bsx$ another difference between these points is that the point $\bsx$ was held out while $\bsx'$ was used in training.

    {While KS and IG and
        LIME and DeepLIFT make use of
        reference values in computation of feature attributions,
        vanilla grad and Input$\times$Gradient
        do not require one to specify reference values.
        They are determined only by local information
        around the target data.
        Since our ABC criterion is defined
        in terms of reference values it is not surprising
        that methods which use those reference values
        get larger ABC values.  It is interesting that a method like Input$\times$Gradient that does not even know the baseline we compare to can do
        as well as it does here.
    }
We note that DeepLIFT does
reasonably well compared to KS and IG,
even though DeepLIFT
is derived without any axiomatic properties such as
the Aumann-Shapley axioms.
\citet{ancona2017towards} pointed out
%the relation of it to
a connection wherein
DeepLIFT can be interpreted as the approximation of
a Riemann sum of IG by a single step with average value
in spite of the difference in their computational procedures.
Their implementation details are also summarized in Appendix  \ref{sec:otherxais}.

%The resulting ABC of insertion and deletion tests are
{
KS performed well and IG is a fast approximation
to it.  Therefore we compare the ABC of insertion and
deletion tests for KS and IG
in Figure \ref{fig:aggregated_insertion}.}
In both cases the left panel shows that the
ABC for KS has a long tail.
This is also true for IG, but to save space we
omit that histogram. Instead we show in the middle panels
that the ABC for IG is almost the same as that for KS
point by point, with KS usually attaining a somewhat
better (larger) ABC than IG.  The right panels there show that ABCs
for random orderings have nearly symmetric distributions with insertion tests having a few more outliers than
deletion tests do.

%corrected for SGN definition in 5/17
\begin{table}
    \centering
    \begin{tabular}{llcc}
        \toprule
        Test Mode & Method                               & $\phm$Mean  & Std. Error \\
        \midrule
        Insertion & Kernel SHAP                          & $\phm$0.628 & 0.034      \\
                  & Integrated Gradients                 & $\phm$0.572 & 0.033      \\
                  & DeepLIFT                             & $\phm$0.548 & 0.032      \\
                  & Vanilla Grad                         & $-$0.093    & 0.026      \\
                  & Input$\times$Gradient                & $\phm$0.206 & 0.027      \\
                  & LIME                                 & $\phm$0.499 & 0.028      \\
                  & Random                               & $-$0.020    & 0.023      \\
                  & Kernel SHAP $-$ Integrated Gradients & $\phm$0.057 & 0.007      \\
        \midrule
        Deletion  & Kernel SHAP                          & $\phm$0.423 & 0.032      \\
                  & Integrated Gradients                 & $\phm$0.422 & 0.032      \\
                  & DeepLIFT                             & $\phm$0.425 & 0.031      \\
                  & Vanilla Grad                         & $-$0.098    & 0.029      \\
                  & Input$\times$Gradient                & $\phm$0.185 & 0.029      \\
                  & LIME                                 & $\phm$0.395 & 0.032      \\
                  & Random                               & $-$0.023    & 0.012      \\
                  & Kernel SHAP $-$ Integrated Gradients & $\phm$0.001 & 0.003      \\
        \bottomrule
    \end{tabular}
    \caption{Mean insertion and deletion ABCs for 391 of the Bangalore
        housing data points, rounded to three places. }
    \label{tb:diffABC}
\end{table}

\begin{figure}[t]
    \centering
    \includegraphics[width=\linewidth]{./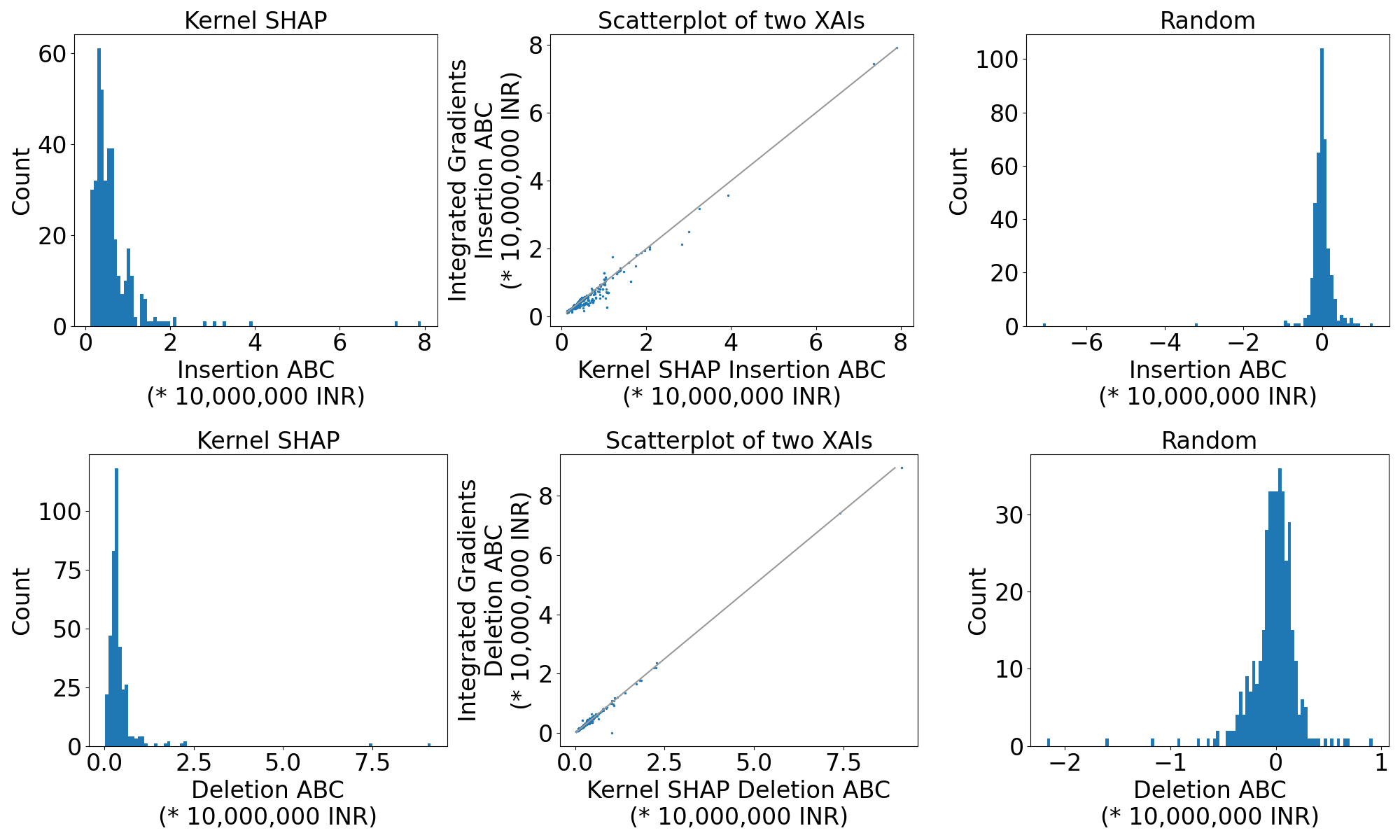}
    \caption{
        The top row shows results of insertion tests.  The bottom row is for deletion tests.
    }
    \label{fig:aggregated_insertion}
\end{figure}

Figure \ref{fig:tiled_inlier} shows some insertion
and deletion curves comparing a randomly
chosen data point to a counterfactual reference point.
Figure \ref{fig:tiled_outlier}
shows analogous plots for
the data with the greatest
differences in ABC between
KS and IG.
It shows that  a very
large ABC difference between methods in the insertion test need not have a large difference in the deletion tests and vice versa.

\begin{figure}[t]
    \includegraphics[width=\linewidth]{./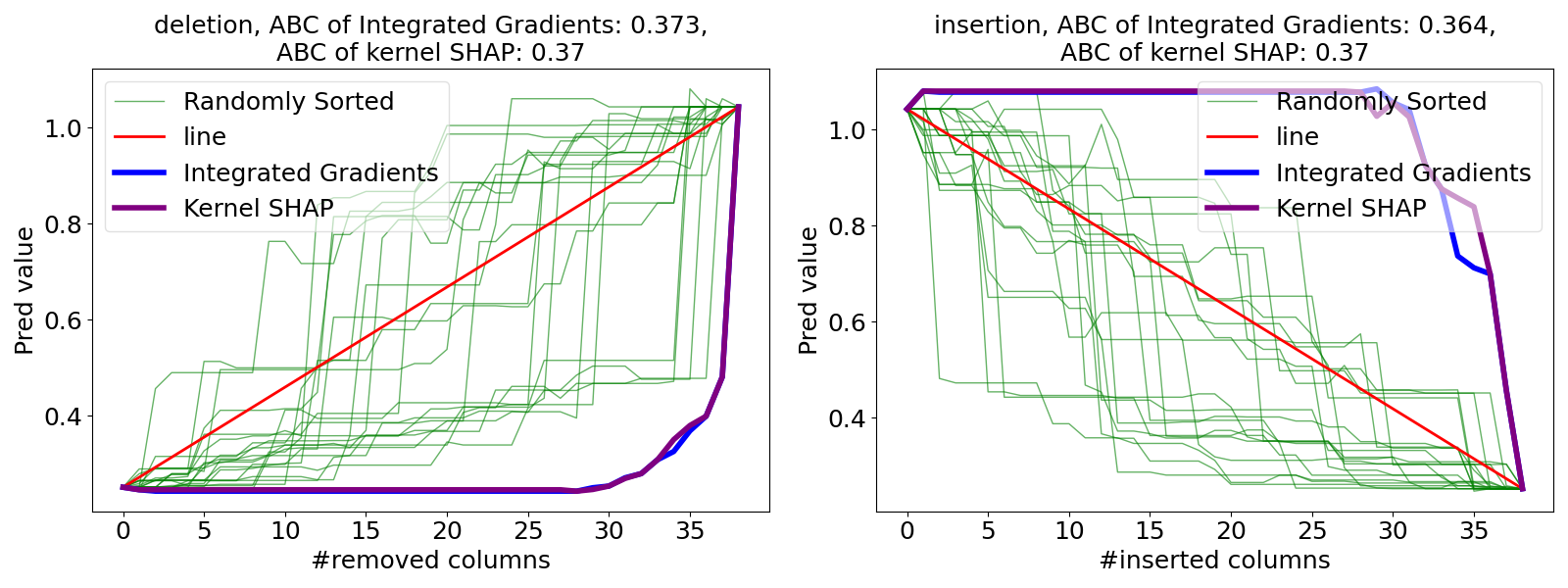}
    \caption{An example of the deletion and insertion tests for the Bangalore housing  dataset.
        The %top row
        left panel is for a deletion test and the
        %bottom row
        right panel is for a insertion test.
        %The right panels
        Both pictures include 20 curves for
        random variable orders. %The left panels have one.
    }
    \label{fig:tiled_inlier}
\end{figure}

\begin{figure}[t]
    \includegraphics[width=\linewidth]{./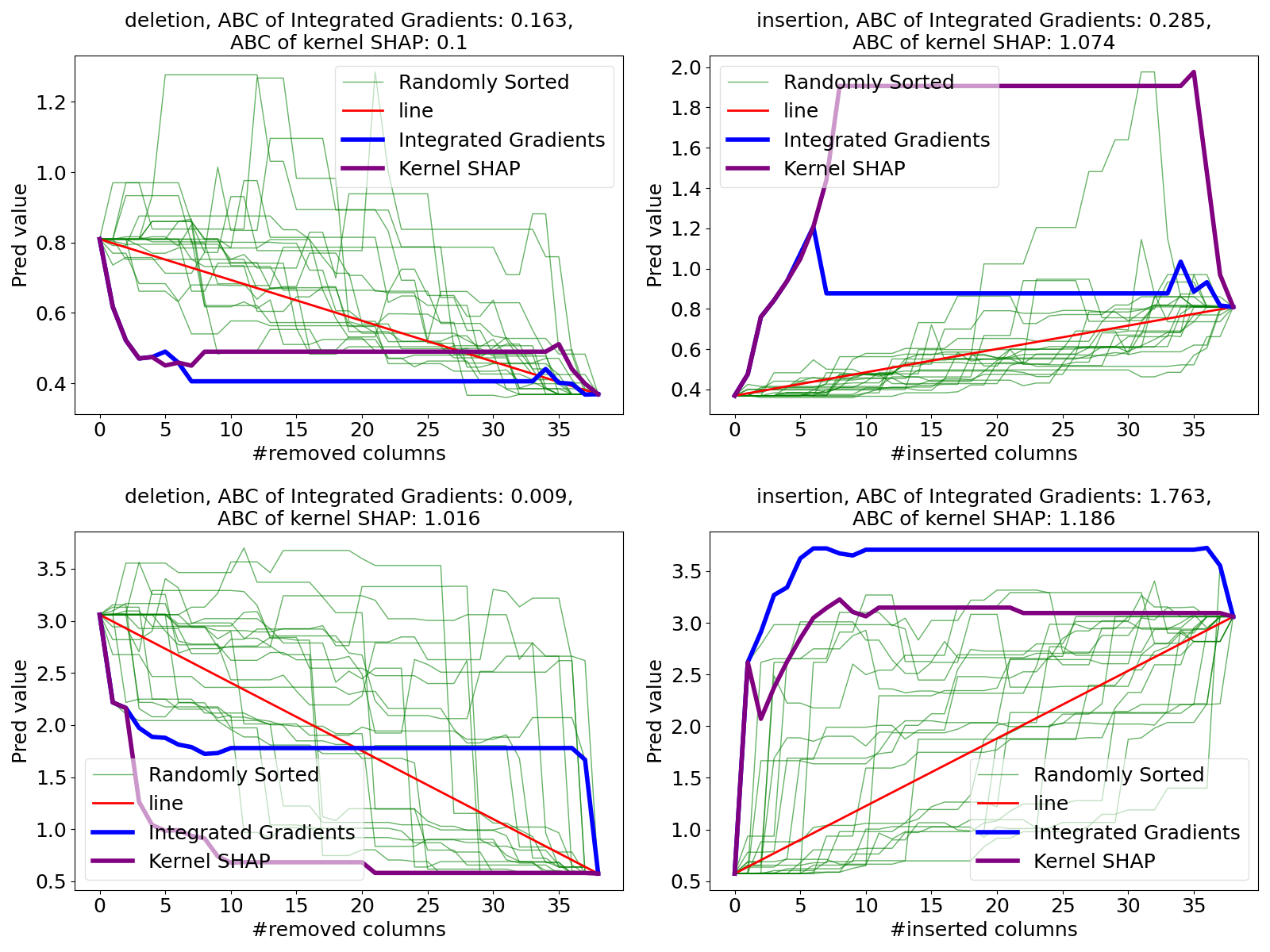}
    \caption{These figures show data
        points whose differences in ABC between
        KS and IG are largest in each of tests in
        the Bangalore housing  dataset.
    }
    \label{fig:tiled_outlier}
\end{figure}

\subsection{CERN Electron Collision Data}
\label{sec:cernexp}
The CERN Electron Collision Data
\citep{cerndataset}
is a dataset about dielectron collision events at CERN.
It includes continuous variables
representing the momenta and energy of the electrons,
as well as discrete variables for the charges of the electrons ($\pm 1$: positrons or electrons).
Only the data whose invariant mass of two
electrons (or positrons) was in the range from $2$ to $110$
GeV were collected.
We treat it as a regression problem to predict
their invariant mass from the other 16 features.

The data contains the physical observables of two electrons after the collisions
whose tracks are reconstructed from the information captured in detectors
around the beam.
The features are as follows:
The total energy of the two electrons,
the three directional momenta, the transverse momentum, the pseudorapidity,
the phi angle and the charge of each electron.
They are highly dependent features because some of them are calculated from the others.
For instance, since a beam line is aligned on the $z$-axis as usual in particle physics,
the transverse momenta $p_{t_i}$ for $i=1,2$ are composed of $p_{x_i}$ and $p_{y_i}$
such that $p_{t_i}^2 = p_{x_i}^2 + p_{y_i}^2$.
The phi angle $\phi_i$ is the angle between $p_{x_i}$ and $p_{y_i}$ where
$p_{t_i} = p_{x_i} \cos \phi_i$.
The total energy of them is also calculated relativistically.
Since the momenta are recorded in GeV unit,
which overwhelms the static mass of electrons ($\sim511$ keV in natural unit),
the total energies $E_{i}$ are $E_i^2 \simeq p_{x_i}^2 + p_{y_i}^2 + p_{z_i}^2$.
The pseudorapidities $\eta_i$ are given as
angles from a beam line.
The definition is
$\eta = -\frac{1}{2} \ln \frac{|\bm{p}|-p_z}{|\bm{p}|+p_z} \simeq -\frac{1}{2} \ln \frac{E-p_z}{E+p_z}$
for each $\eta_i$.
Regarding these definitions,
only 8 (three directional momenta and the charges of each electrons) of the 16 are
independent features,
and the other features are used as
convenient transformed coordinates in particle physics.
Actually, the invariant mass $M$, the prediction target feature, is also
approximated arithmetically
from the momenta as $M^2 \simeq (E_1 + E_2)^2 - |\bm{p}_1 + \bm{p}_2|^2$
within 0.6\% residual error on average for this dataset.
From this viewpoint,
users of machine learning might confirm that the models properly exclude the
charges from evidence of the predictions via XAI.
This aspect is, in a sense, a case where
ground truth in XAI can be obtained from domain knowledge. We have made such investigations but
omit them to save space and because they are
not directly related to insertion and deletion
testing.

We omit data with missing values, and randomly select 80\% of the complete
observations (79,931 data points) to construct an MLP.
Embedding layers are not placed in this MLP, and
all variables are Z-score normalized.
Hyperparameters such as the number of units in
each layer of the MLP are determined by a
hyperparameter search described in Appendix~\ref{sec:cerncolldata}.

The predictions for the  2,000 held out data points $\bsx$
were inspected with both KS and IG.
The reference data $\bsx'$ used in XAI methods are collected from these 2,000 data under this policy:
\begin{compactitem}
    \item $x'_j\ne x_j$ for all $j\in1{:}n$ including charges,
    \item $\bsx'$ is among the $20$ smallest such $\Vert\bsx-\cdot\Vert$ values, and
    %\item it maximizes $|y'-y|$ subject to the above.
    \item it maximizes $|f(\bsx')-f(\bsx)|$ subject to the above.
\end{compactitem}
This policy is called the  counterfactual policy below.  It has similar
motivations to the policy we used for the Bangalore housing data.
In this case it was possible to compute KS exactly using $2^{16}=65{,}536$ function evaluations.

    {
        The results are given in Table~\ref{tb:diffABC_cern}.
        KS is best for both insertion and deletion measures.
        IG and DeepLIFT are close behind.  LIME is nearly
        as good and the simple gradient methods once again
        do poorly.  As we did for the Bangalore housing data, we
        make graphical comparisons between KS and IG.}

\begin{table}
    \centering
    \begin{tabular}{llrc}
        \toprule
        Test Mode & Methods                              & \hfil Mean & Std. Error \\         \midrule
        Insertion & Kernel SHAP                          & 18.535     & 0.215      \\
                  & Integrated Gradients                 & 18.289     & 0.213      \\
                  & DeepLIFT                             & 18.118     & 0.211      \\
                  & Vanilla Grad                         & $-$1.310   & 0.252      \\
                  & Input$\times$Gradient                & 7.620      & 0.216      \\
                  & LIME                                 & 17.319     & 0.209      \\
                  & Random                               & $-$0.380   & 0.175      \\
                  & Kernel SHAP $-$ Integrated Gradients & 0.246      & 0.025      \\
        \midrule
        Deletion  & Kernel SHAP                          & 16.752     & 0.176      \\
                  & Integrated Gradients                 & 16.315     & 0.173      \\
                  & DeepLIFT                             & 16.646     & 0.176      \\
                  & Vanilla Grad                         & 0.226      & 0.256      \\
                  & Input$\times$Gradient                & 7.940      & 0.187      \\
                  & LIME                                 & 15.845     & 0.170      \\
                  & Random                               & $-$0.268   & 0.179      \\
                  & Kernel SHAP $-$ Integrated Gradients & 0.437      & 0.025      \\
        \bottomrule
    \end{tabular}
    \caption{Mean insertion and deletion ABCs for 2000 CERN electron collision data points, under the counterfactual policy described in the text.}
    \label{tb:diffABC_cern}
\end{table}

{
The results for the insertion test are
shown in Figure \ref{fig:aggregated_insertion_cern}.
The deletion test results were very similar and
are omitted.
These results are similar to what we saw
in the previous experiment.
As a meaningful XAI metric,
KS provides a larger ABC than
the other orderings we tried.
Also, even in this case where differentiability
with respect to charges cannot be assumed,
IG does nearly as well as KS.
}

\begin{figure}[t]\begin{center}
        \includegraphics[width=\linewidth]{./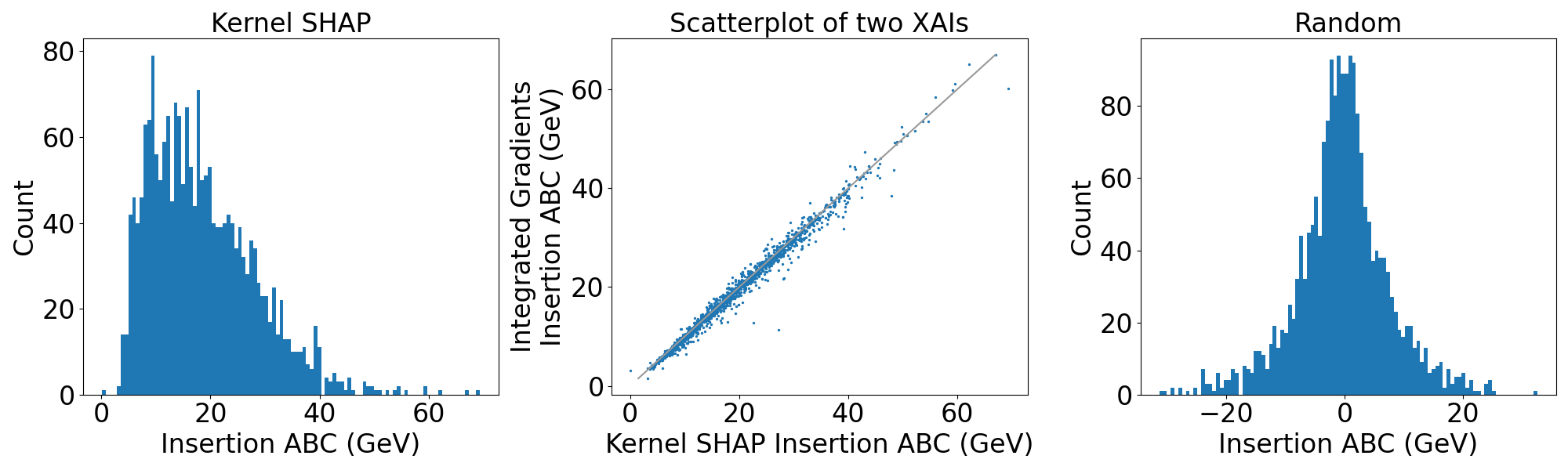}
        \caption{Results of the insertion test in CERN electron collision data. Results for the deletion test looked almost the same.}
        \label{fig:aggregated_insertion_cern}
    \end{center}\end{figure}

Although most of data are close to the 45 degree line in the
center panel of
Figure \ref{fig:aggregated_insertion_cern}
%and Figure \ref{fig:aggregated_deletion_cern},
there are a few cases where KS gets a much  larger
ABC than IG does.
%but there are several data that has large ABC in insertion or deletion test
%on KS but small values in IG.
Two such points are shown in Figure \ref{fig:tiled_outlier_cern}.
Similarly to what we saw in the Bangalore housing example,
the comparisons where KS and IG differ greatly
in the insertion test has them similar in the
deletion test and vice versa.
A scatterplot of deletion versus insertion
areas is given in Figure \ref{fig:corr_insertiondeletion}.
Here and in the following we again only pick KS and IG as representative examples.
Most of the data are
%in $[0,20] \times [0,20]$
near the diagonal in that plot
but there are some exceptional outlying points
where the two ABCs are quite different from each other.

\begin{figure}[t]
    \includegraphics[width=\linewidth]{./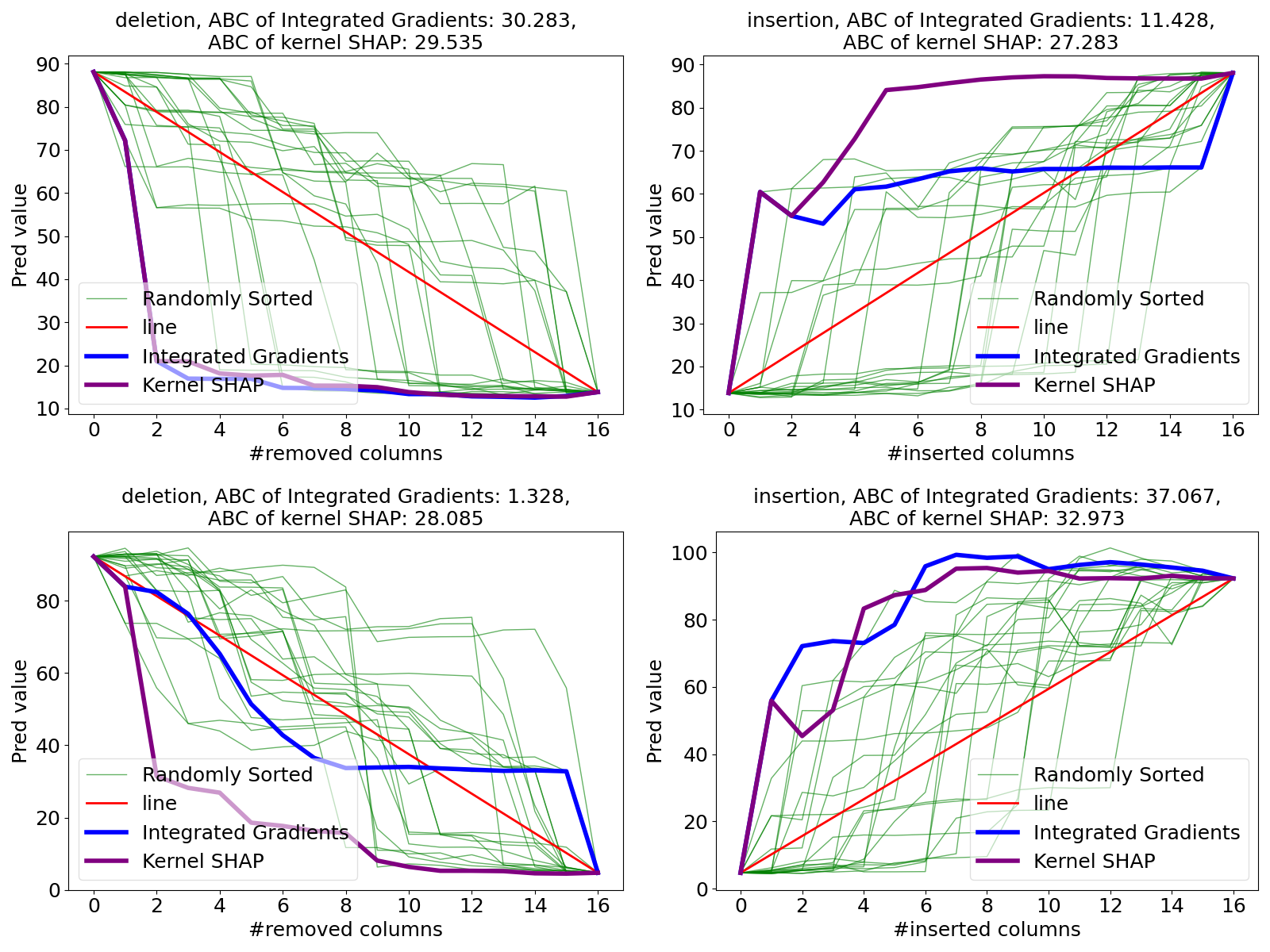}
    \caption{Deletion test outliers in the CERN electron collision data.
        The top row shows a comparison where KS had much
        greater ABC for insertion than IG had.  The bottom row shows a comparison
        where KS had much greater ABC for deletion. In both cases
        KS was comparable to IG for the other ABC type.}
    \label{fig:tiled_outlier_cern}
\end{figure}

\begin{figure}[t]
    \includegraphics[width=\linewidth]{./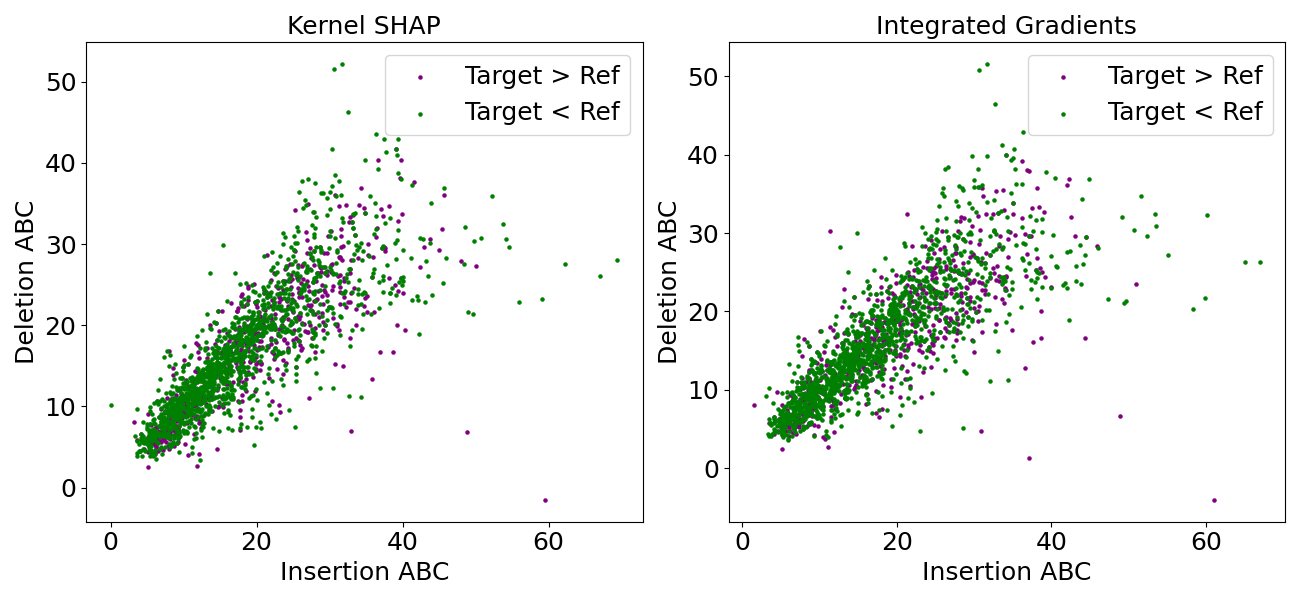}
    \caption{For the CERN data with the counterfactual policy, these
        figures show deletion versus insertion ABCs for
        KS (left) and IG (right).
        They are colored depending on relations of model outputs
        at target and reference point;
        Purple dots: $f(\bsx') > f(\bsx)$ and green dots: $f(\bsx') < f(\bsx)$.}
    \label{fig:corr_insertiondeletion}
\end{figure}

Next we consider two more policies, different
from our counterfactual policy.
In a `one-to-one policy', observations are paired up completely
at random.  They almost always get different values of
the continuous parameters and they get, on average one
different value among the two binary values.
We also consider an `average policy' where the reference point
has the average value for all features.  Such points
are necessarily unphysical, for instance, they have
near zero charge.

The results of these two policies are shown in
Table~\ref{tb:diffABC_oto}. KS attains the best
ABC value in all four comparisons there and IG
is always close but not always second, even
though it treats the particle charges as continuous
quantities.
One striking feature of that table is that the
insertion ABCs are much larger than the deletion
ABCs.  There is a simple explanation.   The invariant
masses must be positive and their distribution is
positively skewed.  The model never predicted a negative
value and the predictions also have positive skewness.
Because the predicted values satisfy a sharp lower
bound, that reduces the maximum possible deletion ABC. Because
they are positively skewed, higher insertion ABCs
are possible.  We see LIME does very well on the one-to-one
policy examples as it did on the counterfactual
policy, but it does not do well on the (unphysical) average
policy comparisons.

\begin{table}
    \centering
    \begin{tabular}{llrcrc}
        \toprule
                  &                       & \multicolumn{2}{c}{One-to-One} & \multicolumn{2}{c}{Average}                             \\
        Test Mode & Method                & $\phm$Mean                     & Std. Error                  & $\phm$Mean   & Std. Error \\
        \midrule
        Insertion & Kernel SHAP           & $\phm$30.392                   & 0.535                       & $\phm$23.518 & 0.216      \\
                  & Integrated Gradients  & $\phm$28.430                   & 0.515                       & $\phm$21.549 & 0.251      \\
                  & DeepLIFT              & $\phm$26.740                   & 0.433                       & $\phm$21.333 & 0.238      \\
                  & Vanilla Grad          & $\phm$1.707                    & 0.215                       & $\phm$4.803  & 0.194      \\
                  & Input$\times$Gradient & $\phm$14.677                   & 0.397                       & $\phm$21.610 & 0.236      \\
                  & LIME                  & $\phm$27.008                   & 0.510                       & $\phm$11.892 & 0.221      \\
                  & Random                & $\phm$3.795                    & 0.266                       & $\phm$4.242  & 0.156      \\
        \midrule
        Deletion  & Kernel SHAP           & $\phm$11.806                   & 0.299                       & $\phm$7.621  & 0.149      \\
                  & Integrated Gradients  & $\phm$10.643                   & 0.310                       & $\phm$6.856  & 0.128      \\
                  & DeepLIFT              & $\phm$11.485                   & 0.294                       & $\phm$7.305  & 0.134      \\
                  & Vanilla Grad          & $-$5.571                       & 0.323                       & $-$4.025     & 0.135      \\
                  & Input$\times$Gradient & $\phm$1.839                    & 0.296                       & $\phm$5.957  & 0.139      \\
                  & LIME                  & $\phm$10.877                   & 0.269                       & $\phm$1.420  & 0.160      \\
                  & Random                & $-$3.797                       & 0.301                       & $-$4.323     & 0.157      \\
        \bottomrule
    \end{tabular}
    \caption{Mean insertion and deletion ABCs for 2,000 of the CERN Electron Collision Data
        points whose reference data are determined in one-to-one policy and average policy respectively.
        The figures are rounded to three places. }
    \label{tb:diffABC_oto}
\end{table}

Figure \ref{fig:corr_insertiondeletion_oto}
shows results for the one-to-one policy.
Unlike Figure~\ref{fig:corr_insertiondeletion} for the counterfactual policy the points for $f(\bsx)>f(\bsx')$ are exactly the same as those that had $f(\bsx')>f(\bsx)$.

One data pair attaining an extreme
difference in Figure
\ref{fig:corr_insertiondeletion_oto} is inspected in
Figure \ref{fig:oto_asym}.
This data gains over 150 GeV in the insertion ABC,
which is anomalously large as this dataset is composed of
events with invariant mass between $2$ and $110$ GeV.
The figure shows that this large output is
due to an extraordinary response to artificial data
which appear on the path connecting $\bsx'$ and $\bsx$.
Both XAI methods (IG and KS) correctly identify
the features that bring on this effect for the insertion test.
Since the one-to-one policy can make long distance pairs
compared to the counterfactual policy,
synthetic data in the insertion and deletion processes
can be far from the data manifold.

The result for the average policy where the reference data is common to all test data is shown in Figure \ref{fig:corr_insertiondeletion_ave}.
This is a setting where the reference data are very far out of distribution, just like a single color image is for an
image classification task. In this case the deletion test, replacing real pixels in order by average ones, attains much smaller ABC values than the insertion test that starts with the average values.
From the above results in Figure \ref{fig:corr_insertiondeletion}, \ref{fig:corr_insertiondeletion_oto} and \ref{fig:corr_insertiondeletion_ave} it is easy to observe that
the relational distributions of insertion and deletion tests are totally different
depending on the reference data policy, even for the same XAI algorithm.
One cause is asymmetry in the response: if $f(\bsx)$ is sharply bounded
below but not above then insertion ABCs can be much larger than
deletion ones.

\begin{figure}[t]
    \includegraphics[width=\linewidth]{./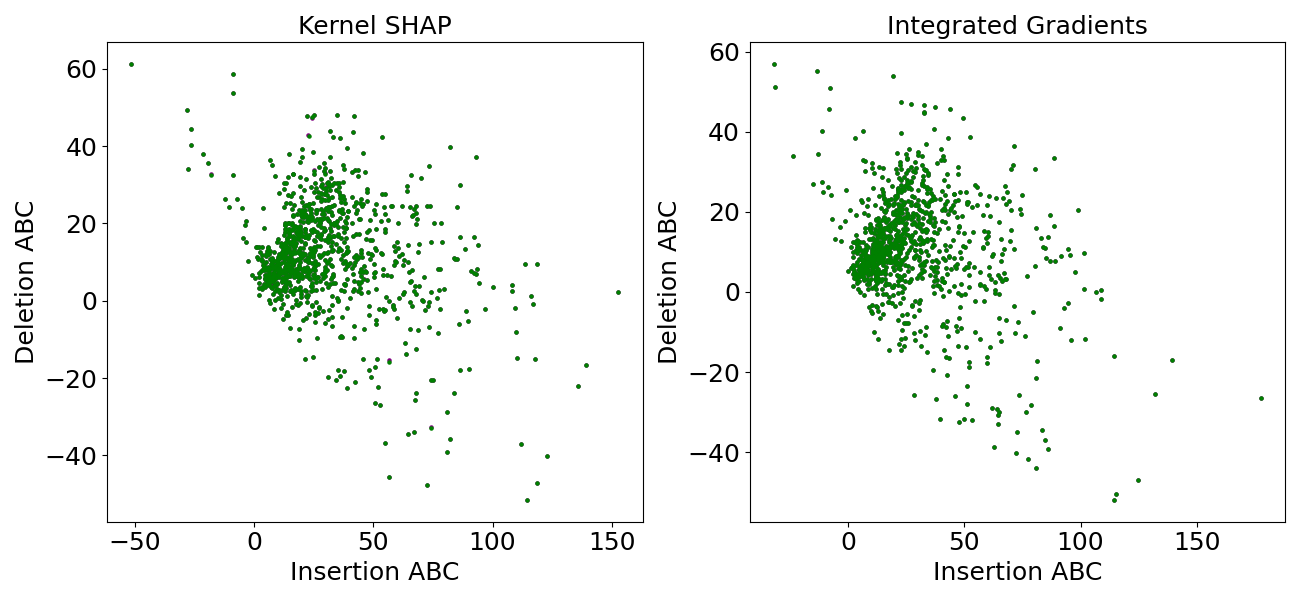}
    \caption{For the CERN data with the one-to-one policy, these
        figures plot deletion versus insertion ABCs.  The left plot is for KS and the right is for IG.
        %The data with $f(\bsx') > f(\bsx)$ (purple dots) are covered by paired data with $f(\bsx') < f(\bsx)$ (green dots).
        Each dot correspond to two data that configure a pair.}
    \label{fig:corr_insertiondeletion_oto}
\end{figure}

\begin{figure}[t]
    \includegraphics[width=\linewidth]{./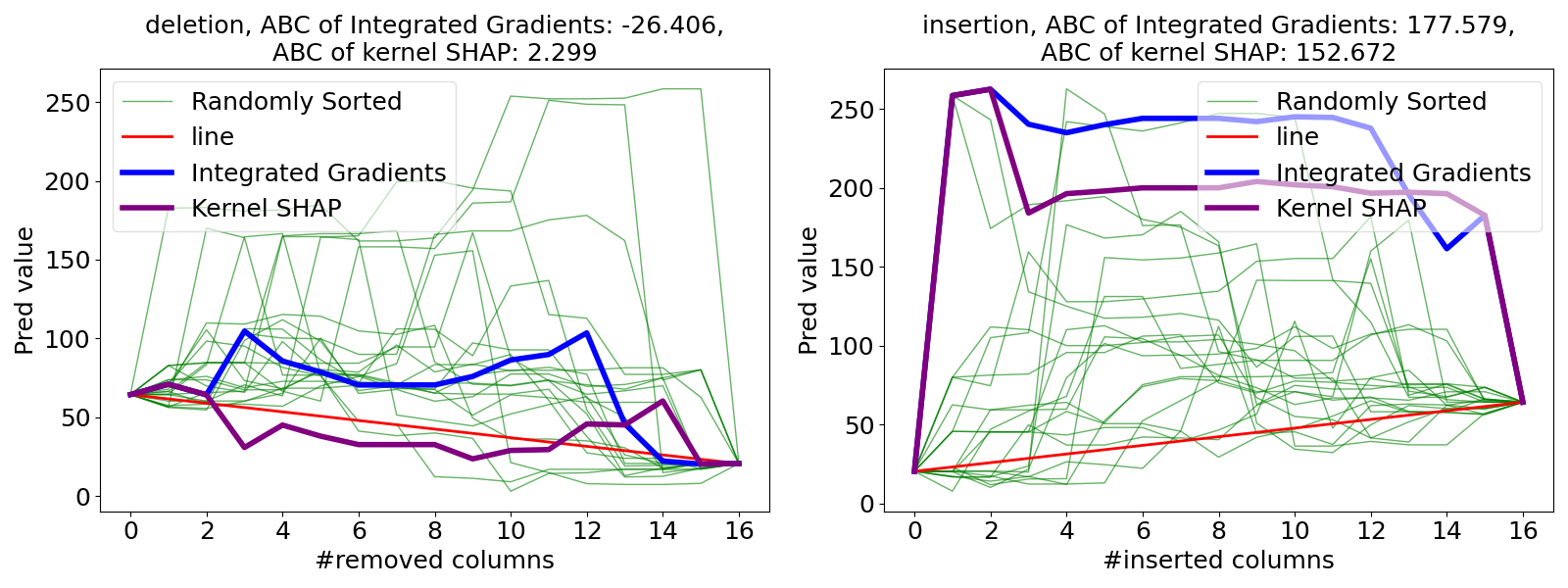}
    \caption{Insertion and deletion curves
        for the most asymmetrical data in Figure~\ref{fig:corr_insertiondeletion_oto} that has
        over 150 GeV in insertion ABC and near to zero GeV
        in deletion ABC.
    }
    \label{fig:oto_asym}
\end{figure}

\begin{figure}[t]
    \includegraphics[width=\linewidth]{./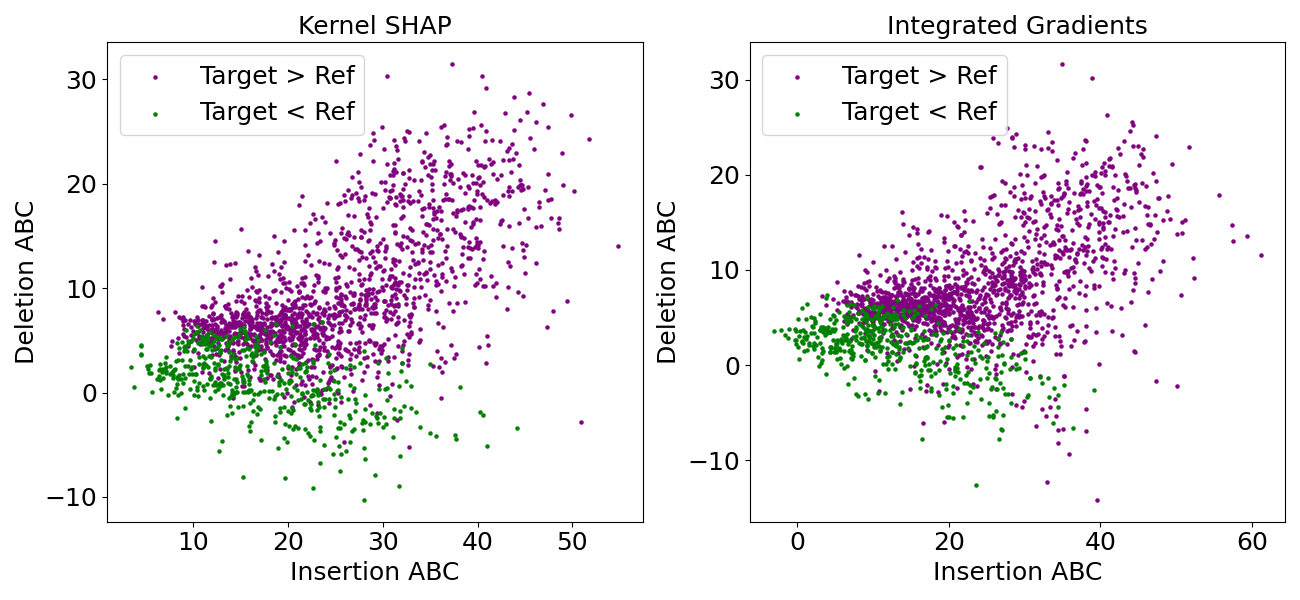}
    \caption{For the CERN data with the average policy, these
        figures plot deletion versus insertion ABCs.  The left plot is for KS and the right is for IG.}
    \label{fig:corr_insertiondeletion_ave}
\end{figure}

\begin{table}
    \centering
    \begin{tabular}{lccc}
        \toprule
                       & Number of Different Cols & \multicolumn{2}{c}{Correlation Coefficients}               \\
        Policy         & (Total Number $= 16$)    & $\phe$KS                                     & $\phe$IG    \\
        \midrule
        Counterfactual & 16$\phantom{.991}$       & $\phm$0.820                                  & $\phm$0.799 \\
        One-to-One     & 14.991                   & $-$0.276                                     & $-$0.297    \\
        Average        & 16$\phantom{.991}$       & $\phm$0.587                                  & $\phm$0.499 \\
        \bottomrule
    \end{tabular}
    \caption{For three policies on $(\bsx,\bsx')$ in
        the CERN data:
        the average number of $j$ with $\bsx_j\ne\bsx'_j$ and
        the correlation between insertion and deletion ABCs,
        for both KS and IG.}
    \label{tb:assym_diff}
\end{table}

The ABC including other XAI methods are aggregated in
Table~\ref{tb:diffABC_oto}.
The relationships among their magnitudes are almost same
as those in the counterfactual policy of Table~\ref{tb:diffABC_cern}.
It is also confirmed that insertion tests have  larger ABC values
than deletion tests with same setup in general and their
differences are larger than those of the counterfactual policy.
This point supports our previous discussion about their asymmetry.
The computation of Input$\times$Gradient does not take the reference data into account, and
this might explain why Input$\times$Gradient has comparatively scores compared to other methods in the average policy.

The average number of different columns between reference data and target data and
correlation coefficients of ABCs in two kinds of tests are summarized in Table \ref{tb:assym_diff}.
All sixteen columns have different values in the counterfactual policy by definition.
The deviation from sixteen in the value in one-to-one policy is mostly from
the two charges of the data, which can take only two levels, $+1$ or $-1$.
In this sense, the reference data in average policy is unphysical data
since it has charges that are near zero.
The correlation coefficients between two tests also vary between policies.
We note also that
the behavior of the two ABCs in the average policy strongly depends on whether $f(\bsx) > f(\bsx')$ or $f(\bsx) < f(\bsx')$
as seen in the scatter plots Figure \ref{fig:corr_insertiondeletion_ave}.

\section{RemOve and Retrain Methods}\label{sec:roarexample}
In this section we compare KS and IG via ROAR (RemOve And Retrain)  \citep{hooker2018benchmark}.
{ROAR is significantly more expensive
to study than the other methods
we consider as it requires retraining
the models, and so we did not apply it
to all of the methods.
We opted to apply it just
to Kernel SHAP and IG.  We chose IG as our
representative fast method because IG can be used
on more general models than DeepLIFT can.
We chose Kernel SHAP as the
other method because it had best or near
best ABC values on our numerical examples.
The task in ROAR is about which variables
are important to the model's accuracy and not about
which variables are important to any specific
prediction.  As a result the values in ROAR
are not comparable to the other AUC and
ABC values that we have computed.
}

The original proposal of ROAR measures the drop in accuracy for
image classification tasks and applying it to regression tasks with tabular data
raises the same issues
as extending insertion/deletion tests to regression.
We measure the test loss on
    {held out data} as a measure of
retrained model performance.
Retraining procedures are taken with an increasing number of
removed features at each quantile in $\{0.1, 0.3, 0.5, 0.7, 0.9, 1.0\}$, where $1.0$ means that all features are removed.
The model architecture and hyperparameters
of retrained models are
the original models as described in Sections~\ref{sec:housingprices} and~\ref{sec:cerncolldata}.

\begin{figure}[t]
    \begin{center}
        \includegraphics[width=\linewidth]{./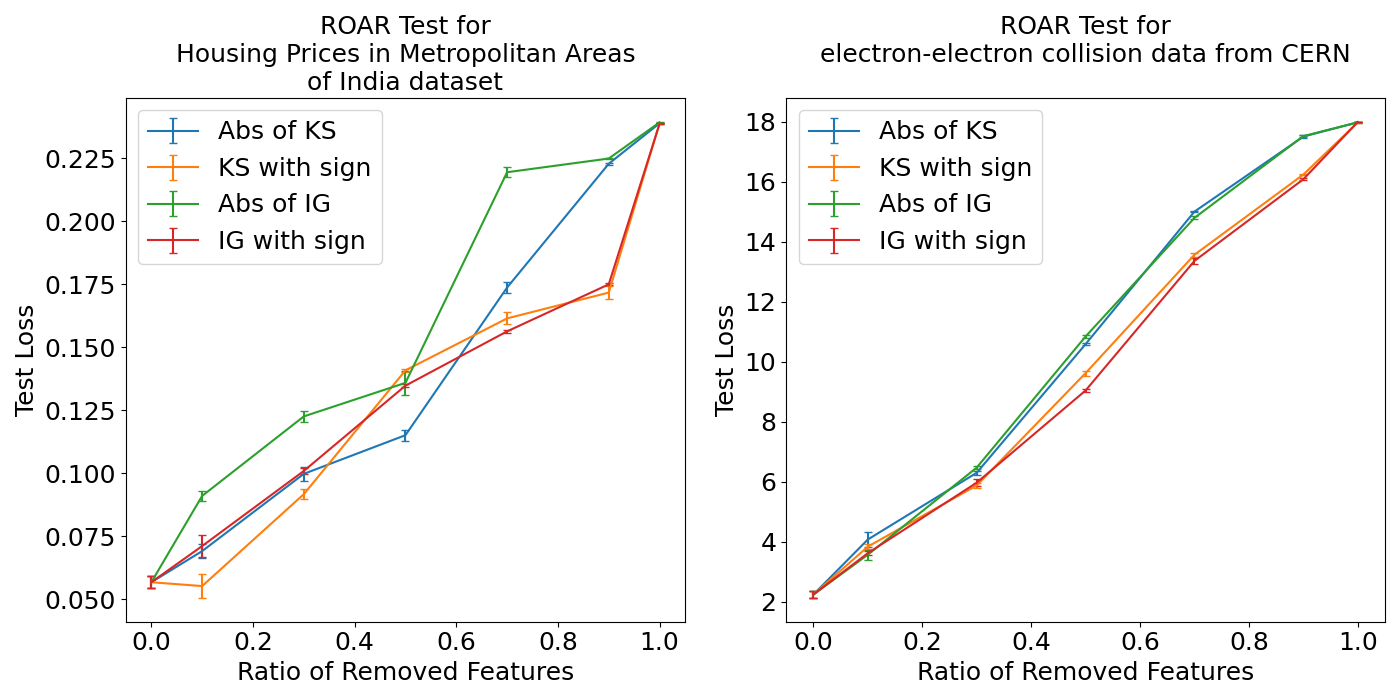}
        \caption{The result of ROAR test for the Bangalore housing dataset and the CERN Electron Collision Data.}
        \label{fig:roar_results}
    \end{center}
\end{figure}

To use ROAR we must
decide how to remove features in the data.
Removing in the original ROAR
algorithm means padding pixels of the
original images
with noninformative values such as gray
levels.
In our experiments,
the important features are overwritten by average values for continuous features and by modes for discrete features.
These average and modal values are taken from the training data.

The results of our ROAR calculations
are shown in Figure \ref{fig:roar_results}.
Features to be removed are sorted both by their
absolute values and by their original signed
values for both KS and IG.
    {The error bars
        show plus or minus one standard error
        computed from five replicates.}

Since ROAR in this experiment measures the
%squared error
Huber
loss on the test points,
it should be unaffected by the signs of the attributions and
sensitive to their magnitude. For this reason,
sorting features by the absolute values of their
attributions should give a better score than
sorting by their signed values.
This is a contrasting point to insertion and deletion tests.

    {
        The results in Figure~\ref{fig:roar_results} are surprising to us.  The curves we see are very nearly straight lines connecting the loss with all features present to the loss with no features present, so there is very little area between
        them and a straight line connecting the end points.
        This means that the loss in accuracy from variables deemed
        most important is about the same as those deemed
        less important.
        This could be because neither KS nor IG are able to identify important predictors for this task. It could also be that the majority of predictor variables in this data can be
        replaced by a combination of some other predictors which then prevents a large
        reduction in accuracy from removing a subset
        of predictors prior to retraining.

        A second surprise is that the signed ordering, that we used
        as a control
        that should have been beaten by the absolute ordering
        was nearly as good as the absolute ordering.

    }

As before KS and IG are comparable, though here
both seem disappointing.
Using IG with variables sorted by their absolute values even
came out superior to KS in
the Bangalore housing  dataset.

\section{Conclusion}
\label{sec:con}
In this paper we have extended insertion and deletion
tests to regression problems.  That includes getting formulas
for the effects of interactions on the AUC and ABC measures,
finding the expected area between the insertion/deletion
curve under random variable ordering, and replacing the
horizontal axis by a more appropriate straight line
reference.  We gave a condition under which
sorting variables by their Shapley value will optimize
ABC as well as constructing an example where that
does not happen.

We compared six methods and several policies on two
datasets.  We find that overall the Kernel SHAP gave
the best areas.  The much faster Integrated Gradients
method was nearly as good.
In order to even run IG in settings with binary
variables, some strategy for using continuum
values must be employed.
We opted for the simplest choice
of just casting the booleans to real values.

A very natural policy question is whether to prefer
insertion or deletion.  \cite{petsiuk2018rise} consider
both and do not show a strong preference for one over
the other.  They use deletion when comparing a real image
to a blank image (deleting real pixels by replacing them with zeros). They use insertion when comparing a real image to a blurred one (inserting real pixels into the blurred image).
In other words the choice between insertion and deletion
is driven by the counterfactual point.
In the regression setting both inputs points could be
real data.  By studying both insertion and deletion we
have seen that they can differ. A natural way to break
the tie is to sum the ABC values for both insertion
and deletion.  Under a completely random permutation
the expected value of that sum is zero.  See Appendix~\ref{sec:abc4del}. In our examples, IG closely matches
KS for both insertion and deletion, so it also matches
their sum.

\section*{Acknowledgement}
This work was supported by the U.S.\ National Science
Foundation grants IIS-1837931 and DMS-2152780,
and by Hitachi, Ltd.  We thank Benjamin Seiler for
helpful comments. Comments
from three anonymous reviewers have helped
us improve this paper.

\bibliographystyle{apalike}
\bibliography{deletion_insertion}
\appendix

\section{Detailed Model Descriptions of the Experiments}\label{sec:modeldetails}
This appendix provides some background details on the experiments conducted in this article.
\subsection{The Example in Image Classification}\label{sec:imageexample}
Here we summarize how the example of
insertion and deletion tests in image classification
shown in Figure \ref{fig:cub} was computed.
The image is from \citet{wah2011caltech} and
the model is pretrained for classification in ImageNet \citep{russakovsky2015imagenet}
whose architecture is EfficientNet-B0 \citep{tan2019efficientnet}.
The preprocessing of the model includes cropping
the center of the image to make it square
as shown in the saliency map of Figure \ref{fig:cub}.

The saliency map is computed using SmoothGrad of
\cite{smilkov2017smoothgrad}.
It averages IG computations over 300
randomly generated baseline images of Gaussian noise.
%drawn from IG
%whose background data is randomly generated 300 noisy images
%implemented in Captum \citep{kokhlikyan2020captum} as NoiseTunnel
%to regularize the fluctuations in a saliency map
%of high-resolution images \citep{smilkov2017smoothgrad}.
The model output are distributed to the latent features
in the first convolutional layer
(implemented in Captum \citep{kokhlikyan2020captum} as
layer integrated gradient).
That layer has 32 blocks each of which
has a $112\times112$ grid of $3$ pixels times
$3$ pixels. The effect of any pixel is summed
over those 32 blocks and over all $3\times3$
patterns that contain it.
In the insertion and deletion test,
this saliency map is then the same size, $224\times224$,
as the preprocessed original image.
%by an usual enlargement.
The reference image for both insertion and deletion tests
was a black image.

%whose color is mean value of ImageNet.

The AUCs for several different choices of reference data are summarized in Table \ref{tb:cub_ABC}. Note that the definition of AUCs are different from the main material of this paper and small deletion AUC is better in this situation.
The reference image has a very significant effect
on the AUCs.

\begin{table}
    \centering
    \begin{tabular}{lcc}
        \toprule
        Reference Image & Deletion AUC & Insertion AUC \\
        \midrule
        Blurred         & 0.981        & 0.740         \\
        Mean            & 0.663        & 0.187         \\
        White           & 0.366        & 0.175         \\
        Black           & 0.270        & 0.201         \\
        \bottomrule
    \end{tabular}
    \caption{AUCs for the albatross
        example of Figure \ref{fig:cub} using various
        reference images. The parameter for blurring the image is the same one \citet{petsiuk2018rise} used.}
    \label{tb:cub_ABC}
\end{table}

\subsection{Model for the Bangalore Housing Data}\label{sec:housingprices}
The detail of the model used in Section~\ref{sec:expindia} is
summarized in Table \ref{tb:modelindia}.
Those hyperparameters, including the number of intermediate layers
were obtained from a
hyperparameter search using Optuna \citep{akiba2019optuna}.
The model is optimized with Huber loss.
Each intermediate layer is accompanied with parametric ReLU \citep{he2015delving} and dropout layers. The dropout ratio is common in all of them.

\begin{table}
    \centering
    \begin{tabular}{ll}
        \toprule
        Hyperparameter    & Value                   \\
        \midrule
        Dropout Ratio     & 0.10031                 \\
        Learning Rate     & $1.7389 \times 10^{-2}$ \\
        Number of Neurons & [333--465--86--234]     \\
        Huber Parameter   & $1.0$                   \\
        \bottomrule
    \end{tabular}
    \caption{Parameters of the MLP model for
        the Bangalore housing data.}
    \label{tb:modelindia}
\end{table}

\subsection{Other XAI methods in Tables \ref{tb:diffABC}, \ref{tb:diffABC_cern} and  \ref{tb:diffABC_oto}}
\label{sec:otherxais}
The implementation details of the other XAI methods than KS and IG which appear in Tables \ref{tb:diffABC}, \ref{tb:diffABC_cern} and \ref{tb:diffABC_oto} are summarized in this subsection.

We use the implementations on Captum \citep{kokhlikyan2020captum}
for those methods, DeepLIFT \citep{shrikumar2017learning}, Vanilla Grad \citep{simonyan2013deep}, Input$\times$Gradient \citep{shrikumar2016not} and LIME \citep{ribeiro2016should} with default set arguments.
Inputs of Input$\times$Gradient are those after applying
Z-score normalizing in the electron-electron collision data from CERN.
Zeros in binary vectors are replaced by a small negative value
($-10^{-4}$) in Input$\times$Gradient to avoid degeneration in the
Metropolitan Areas of India dataset.
As we use parametric ReLU as the activation functions in our model,
it is also treated as a usual nonlinear function in the DeepLIFT calculations.
The reference values that can be set in LIME and DeepLIFT are
chosen as same data to KS and IG,
depending on their policy.

\subsection{CERN Electron Collision Data}\label{sec:cerncolldata}
The hyperparameters for the model used in Section~\ref{sec:cernexp} are
given in Table \ref{tb:modelcern}.
They were obtained from a
hyperparameter search using Optuna \citep{akiba2019optuna}.
Each  intermediate layer is  a parametric ReLU with dropout. The dropout ratio is common to all of the layers.
The performance for test data is depicted in Figure \ref{fig:test_acc2}.  The model is overall very accurate but the very highest values are systematically underestimated.

\begin{figure}[t]
    \begin{center}
        \includegraphics[width=0.5\linewidth]{./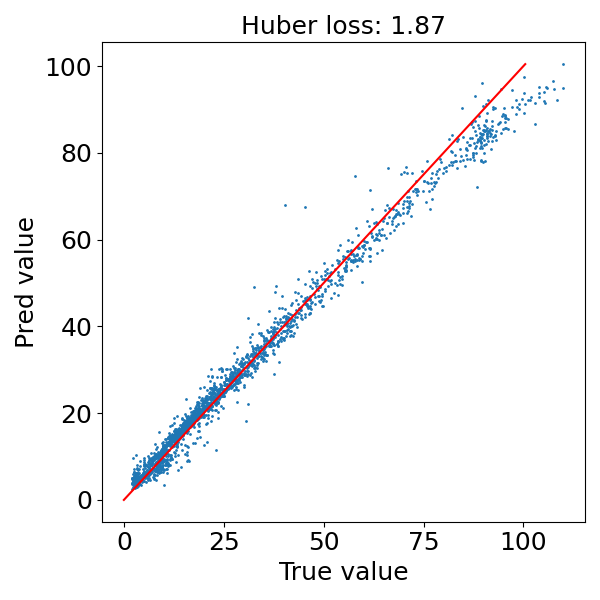}
        \caption{This figure plots estimated versus true
            invariant masses for held out points of the electron-electron collision
            data from CERN.
        }
        \label{fig:test_acc2}
    \end{center}
\end{figure}

\begin{table}
    \centering
    \begin{tabular}{ll}
        \toprule
        Hyperparameter    & Value                         \\
        \midrule
        Dropout Ratio     & 0.11604                       \\
        Learning Rate     & $1.9163 \times 10^{-4}$       \\
        Number of Neurons & [509--421--65--368--122--477] \\
        Huber Parameter   & $1.0$                         \\
        \bottomrule
    \end{tabular}
    \caption{Parameters of the MLP model for the CERN Electron Collision Data.}
    \label{tb:modelcern}
\end{table}

\section{Proof of Theorem~\ref{thm:aucfromdeltaf}\label{sec:auctheory}}

Here we prove that $\auc = \sum_{u\subseteq1{:}n}(n-\lceil u\rceil +1)\Delta_uf$.  We use the anchored decomposition that we define next. We also connect that decomposition to some areas of the literature.

The anchored decomposition is a kind of
high dimensional model representation (HDMR)
that represents a function of $n$ variables
by a sum of $2^n$ functions, one per subset of $1{:}n$
where the function for $u\subseteq1{:}n$ depends on
$\bsx$ only through $\bsx_u$.
The best known HDMR is the ANOVA of
\cite{fish:mack:1923}, \cite{hoef:1948}, \cite{sobo:1969} and \cite{efro:stei:1981}
but there are others.  See
\cite{kuo:sloa:wasi:wozn:2010}.

The anchored decomposition goes back
at least to \cite{sobo:1969}.  It does
not require a distribution on the inputs.  Instead of centering
higher order interaction terms by subtracting expectations, which
don't exist without a distribution, it centers by subtracting
values at default or anchoring input points.  We only need
it for functions  on $\{0,1\}^n$ and without loss of generality
we take the anchor to be all zeros.

We  use $\bszero,\bsone\in\{0,1\}^n$ for
vectors of all ones and all zeros, respectively.
For $u\subseteq1{:}n$ we write
$e_u=\bszero_u{:}\bsone_{-u}$ generalizing the
standard basis vectors $e_j$.
The function we need to study is
$g:\{0,1\}^n\to\real$ where
$$g(e_u)=f(\bsx_u{:}\bsx'_{-u}),$$
gives the values in the curve we study.

The anchored decomposition of $g:\{0,1\}^n\to \real$ is
\begin{align*}
    g(\bsz)             & = \sum_{u\subseteq1{:}n}g_{u}(\bsz),\quad\text{with} \\
    g_{\emptyset}(\bsz) & = g(\bszero),\quad\text{and for $|u|>0$},            \\
    g_{u}(\bsz)         & = g(\bsz_u{:}\bszero_{-u})
    -\sum_{v\subsetneq u}g_{v}(\bsz).
\end{align*}

The main effect in an anchored decomposition is
$g_{j}(\bsz) = g(\bsz_j{:}\bszero_{-j})-g(\bszero)$
and the two factor term for indices $j\ne k$ is
\begin{align*}
    g_{\{j,k\}}(\bsz)
     & = g(\bsz_{\{j,k\}}{:}\bszero_{-\{j,k\}})
    -g_{j}(\bsz) -g_{k}(\bsz)-g_\emptyset(\bsz) \\
     & = g(\bsz_{\{j,k\}}{:}\bszero_{-\{j,k\}})
    -g(\bsz_j{:}c_{-j})
    -g(\bsz_k{:}c_{-k})
    +g(\bszero).
\end{align*}
There is an inclusion-exclusion-M\"obius formula
\begin{align*}
    g_{u}(\bsz) & = \sum_{v\subseteq u}(-1)^{|u-v|}g_v(\bsz_v{:}\bszero_{-v}).
\end{align*}
See for instance \cite{kuo:sloa:wasi:wozn:2010}.
The anchored decomposition
is also called cut-HDMR \citep{alis:rabi:2001} in chemistry,
and finite differences-HDMR in global sensitivity analysis \citep{sobo:2003}.
When $f$ is the value function in a Shapley value
context, the values $g_u(\bsone)$ are known as Harsanyi dividends
\citep{hars:1959}.
Many of the quantities
we use here also feature prominently in the
study of Boolean functions $f:\{0,1\}^n\to\{0,1\}$
\citep{odon:2014}.

The next Lemma is from \cite{mase:owen:seil:2019}. We include the short
proof for completeness.
\begin{lemma}\label{lem:binanchdecomp}
    For integer $n\ge1$,
    let $f:\{0,1\}^n\to\real$ have the anchored decomposition
    $g(\bsz) = \sum_{u\subseteq1{:}n}g_{u}(\bsz)$.
    Then for $w\subseteq1{:}n$,
    \begin{align*}%\label{eq:binanchdecomp}
        g_{u}(\bse_w) = g_{u}(\bsone)1_{u\subseteq w},
    \end{align*}
    where $\bse_w=\bsone_w{:}\bszero_{-w}$.
\end{lemma}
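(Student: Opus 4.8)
The plan is to isolate two structural properties of the anchored decomposition and then obtain the claim by splitting into the cases $u\subseteq w$ and $u\not\subseteq w$. The properties are: (A) each $g_u(\bsz)$ depends on $\bsz$ only through the coordinates $\bsz_u$; and (B) $g_u(\bsz)=0$ as soon as $z_j=0$ for some $j\in u$, i.e.\ whenever an index of $u$ sits at the anchor value $0$. I would prove both by induction on $|u|$, working directly from the recursion $g_u(\bsz)=g(\bsz_u{:}\bszero_{-u})-\sum_{v\subsetneq u}g_v(\bsz)$, so the M\"obius formula is not actually needed here.

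Property (A) is a routine induction: the leading term $g(\bsz_u{:}\bszero_{-u})$ involves $\bsz$ only through $\bsz_u$, and by the inductive hypothesis each $g_v$ with $v\subsetneq u$ involves $\bsz$ only through $\bsz_v$, hence through $\bsz_u$. Property (B) is the crux. The one ingredient I would record first is the face identity $g(\bsz_s{:}\bszero_{-s})=\sum_{v\subseteq s}g_v(\bsz)$, valid for every $s$ and obtained simply by rearranging the recursion for $g_s$. Now fix $j\in u$ with $z_j=0$ and split $\sum_{v\subsetneq u}g_v(\bsz)$ according to whether $j\in v$. The terms with $j\in v$ vanish by the inductive hypothesis for (B), since these are strictly smaller index sets that still contain an index at the anchor. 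The remaining terms range over $v\subseteq u\setminus\{j\}$, and their sum equals $g(\bsz_{u\setminus\{j\}}{:}\bszero_{-(u\setminus\{j\})})$ by the face identity. Because $z_j=0$, the argument $\bsz_u{:}\bszero_{-u}$ of the leading term coincides with $\bsz_{u\setminus\{j\}}{:}\bszero_{-(u\setminus\{j\})}$, so the two cancel and $g_u(\bsz)=0$.

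With (A) and (B) available the lemma follows at once. At $\bse_w=\bsone_w{:}\bszero_{-w}$ the coordinates indexed by $u$ are all equal to $1$ precisely when $u\subseteq w$, and otherwise at least one of them is $0$. If $u\subseteq w$ then $\bse_w$ and $\bsone$ agree on $u$, so (A) gives $g_u(\bse_w)=g_u(\bsone)$. If $u\not\subseteq w$, pick $j\in u\setminus w$; then $(\bse_w)_j=0$ and (B) gives $g_u(\bse_w)=0$. Together these read $g_u(\bse_w)=g_u(\bsone)1_{u\subseteq w}$. The only step demanding care is the bookkeeping inside (B): one must check that removing the $v\ni j$ terms leaves exactly the subsets of $u\setminus\{j\}$ so that the face identity applies and the cancellation is exact. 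Everything else is a direct unwinding of the definitions.
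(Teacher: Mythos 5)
Your proof is correct. It reaches the same two intermediate facts the paper relies on --- that $g_u$ vanishes whenever some coordinate $z_j$ with $j\in u$ sits at the anchor, and that $g_u$ depends on $\bsz$ only through $\bsz_u$ --- and the final case split over $u\subseteq w$ versus $u\not\subseteq w$ is identical. Where you differ is in how the vanishing property is obtained. The paper starts from the M\"obius (inclusion--exclusion) formula $g_u(\bsz)=\sum_{v\subseteq u}(-1)^{|u-v|}g(\bsz_v{:}\bszero_{-v})$, which it quotes from the literature, and kills the sum by pairing each $v\not\ni j$ with $v\cup\{j\}$: the two evaluation points coincide when $z_j=0$, so the alternating sum telescopes to zero in one line. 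You instead induct on $|u|$ directly from the recursive definition, using the face identity $g(\bsz_s{:}\bszero_{-s})=\sum_{v\subseteq s}g_v(\bsz)$ to collapse the $v\subseteq u\setminus\{j\}$ terms and the inductive hypothesis to discard the $v\ni j$ terms. Your bookkeeping there is right: the subsets $v\subsetneq u$ with $j\notin v$ are exactly the subsets of $u\setminus\{j\}$, so the face identity applies and the cancellation with the leading term is exact. The trade-off is that the paper's route is shorter but leans on the M\"obius formula as an imported fact, whereas yours is self-contained from the recursion at the cost of an induction; either is acceptable, and your explicit inductive proof of property (A) fills in a step the paper asserts without comment.
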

\begin{proof}
    The inclusion-exclusion formula for the binary anchored
    decomposition is
    $$
        g_{u}(\bsz) = \sum_{v\subseteq u}(-1)^{|u-v|}g(\bsz_v{:}\bszero_{-v}).
    $$
    Suppose that $z_j=0$ for $j\in u$. Then, splitting up the alternating sum
    \begin{align*}
        g_{u}(\bsz) & = \sum_{v\subseteq u-j}(-1)^{|u-v|}
        (g(\bsz_v{:}\bszero_{-v}) - f(\bsz_{v+j}{:}\bszero_{-v-j}))
        = 0
    \end{align*}
    because $\bsz_v{:}\bszero_{-v}$
    and $\bsz_{v+j}{:}\bszero_{-v-j}$ are the same point when $z_j=0$.
    It follows that $g_{u}(\bse_w)=0$ if $u\not\subseteq w$.

    Now suppose that $u\subseteq w$.
    First $g_{u}(\bsz)=g_{u}(\bsz_u{:}\bsone_{-u})$
    because $g_{u}$ only depends on $\bsz$ through $\bsz_u$.
    From $u\subseteq w$ we have $(\bse_w)_u=\bsone_u$.
    Then
    $g_{u}(\bse_w)=g_u(\bsone_u{:}\bsone_{-u})=
        g_u(\bsone)$, completing the proof.
\end{proof}

We are now ready to state and
prove our theorem expressing the AUC in
terms of the anchored decomposition.
Without loss of generality it takes
$\pi$ to be the identity permutation.

\begin{theorem}\label{thm:aucfromdeltaf}
    Let $f:\bsx\to\real$ and let $g:\{0,1\}^n\to\real$
    be defined by $g(\bse_u)=f(\bsx'_u{:}\bsx_{-u})$,
    and let $\pi = (1,2,\dots,n)$.
    Then the $\auc$ given by~\eqref{eq:defauc} satisfies
    \begin{align}\label{eq:aucfromdeltaf}
        \auc = \sum_{u\subseteq1{:}n}g_u(\bsone)(n-\lceil u\rceil+1)
        = \sum_{u\subseteq1{:}n}(n-\lceil u\rceil+1)\Delta_uf.
    \end{align}
\end{theorem}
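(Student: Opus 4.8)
The plan is to translate the AUC sum into the anchored decomposition of $g$ and then invoke Lemma~\ref{lem:binanchdecomp} to read off which interaction terms are ``active'' at each stage of the insertion curve. The first step is purely notational: with $\pi$ the identity we have $\Pi(j)=\{1,\dots,j\}=1{:}j$, so the $j$-th point on the curve is $\tilde\bsx^{(j)}=\bsx'_{1:j}{:}\bsx_{-(1{:}j)}$. By the defining relation $g(\bse_u)=f(\bsx'_u{:}\bsx_{-u})$ this is exactly $f(\tilde\bsx^{(j)})=g(\bse_{1{:}j})$, where $\bse_{1{:}j}=\bsone_{1{:}j}{:}\bszero_{-(1{:}j)}$; the endpoints check out since $\bse_\emptyset=\bszero$ gives $f(\bsx)$ and $\bse_{1{:}n}=\bsone$ gives $f(\bsx')$. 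Hence $\auc=\sum_{j=0}^n g(\bse_{1{:}j})$.

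Next I would substitute the anchored decomposition $g=\sum_{u}g_u$ into each term and interchange the order of summation to obtain $\auc=\sum_{u\subseteq1{:}n}\sum_{j=0}^n g_u(\bse_{1{:}j})$. Here Lemma~\ref{lem:binanchdecomp} with $w=1{:}j$ does the essential work: it gives $g_u(\bse_{1{:}j})=g_u(\bsone)\,1_{u\subseteq 1{:}j}$, so each interaction term contributes its full value $g_u(\bsone)$ once all of $u$ has been switched to $\bsx'$ and nothing before that. The containment $u\subseteq1{:}j$ is equivalent to $\lceil u\rceil\le j$, so the inner sum counts the integers $j\in\{0,1,\dots,n\}$ with $j\ge\lceil u\rceil$, of which there are exactly $n-\lceil u\rceil+1$. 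This produces the first equality $\auc=\sum_{u\subseteq1{:}n}g_u(\bsone)(n-\lceil u\rceil+1)$.

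To finish I would verify the identification $g_u(\bsone)=\Delta_u f$. Evaluating the inclusion--exclusion--M\"obius formula $g_u(\bsz)=\sum_{v\subseteq u}(-1)^{|u-v|}g(\bsz_v{:}\bszero_{-v})$ at $\bsz=\bsone$ gives $g_u(\bsone)=\sum_{v\subseteq u}(-1)^{|u-v|}g(\bse_v)=\sum_{v\subseteq u}(-1)^{|u-v|}f(\bsx'_v{:}\bsx_{-v})$, which is precisely the definition of $\Delta_u f$ from the Interactions subsection. Combining this with the previous display yields the second equality and completes the argument.

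I do not expect a genuine obstacle; the whole proof is a bookkeeping translation once the three identifications are in place. The one place demanding care is the indexing convention: keeping straight that $\bse_u$ places $\bsx'$ at the ones and $\bsx$ at the zeros, and that the anchor is $\bszero$ (the point $\bsx$), so that Lemma~\ref{lem:binanchdecomp} applies verbatim. Conceptually the crux is recognizing that the delay factor $n-\lceil u\rceil+1$ is nothing more than a count of how many of the curve's $n+1$ sampled points lie at or beyond the stage where the last index of $u$ has been inserted.
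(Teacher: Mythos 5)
Your proposal is correct and follows essentially the same route as the paper's own proof: rewrite each curve point as $g(\bse_{1:j})$, expand in the anchored decomposition, apply Lemma~\ref{lem:binanchdecomp} to reduce the inner sum to a count of $j\ge\lceil u\rceil$, and identify $g_u(\bsone)=\Delta_u f$ via the M\"obius formula. No substantive differences to report.
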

\begin{proof}
    First, $\tilde\bsx^{(j)} = \bsx_{1{:}j}{:}\bsx'_{-\{1{:}j\}}$.
    Then
    $$
        \auc = \sum_{j=0}^nf(\tilde\bsx^{(j)})
        = \sum_{j=0}^ng(e_{1:j})
        = \sum_{j=0}^n\sum_{u\subseteq1{:}n}g_u(e_{1:j}).
    $$
    Next using Lemma~\ref{lem:binanchdecomp},
    we find that $\auc$ equals
    \begin{align*}
        \auc & =\sum_{j=0}^n\sum_{u\subseteq1{:}n}g_u(\bsone)1_{u\subseteq 1:j} \\
             & =
        \sum_{j=0}^n\sum_{u\subseteq1{:}n}g_u(\bsone)1_{\lceil u\rceil \le j}   \\
             & =
        \sum_{u\subseteq1{:}n}g_u(\bsone)(n-\lceil u\rceil+1).
    \end{align*}
    Finally
    \begin{align*}
        g_u(\bsone)
        = \sum_{v\subseteq u}(-1)^{|u-v|}g(e_v)
        = \sum_{v\subseteq u}(-1)^{|u-v|}
        f(\bsx'_v{:}\bsx_{-v})=\Delta_uf.\qquad\qedhere
    \end{align*}
\end{proof}

\subsection{ABC for Deletion}\label{sec:abc4del}
Now suppose that we use the deletion strategy of replacing $x_j$ by $x_j'$ in the opposite order from that used above, meaning that we change variables thought to most decrease $f$ first.
Then letting $\lfloor u \rfloor$ be the index of the smallest element of $u\subseteq1{:}n$, with $\lfloor\emptyset\rfloor=n+1$ by convention, we get by the argument in Theorem~\ref{thm:aucfromdeltaf},
$$
    \auc' = \sum_{u\subseteq1{:}n}\lfloor u\rfloor\Delta_uf.
$$
Our area between the curves, ABC, measure for deletion is
\begin{align*}
    \abc' & =\frac{n+1}2\bigl( f(\bsx)+f(\bsx')\bigr)-\auc'                          \\
          & = \sum_{u\ne\emptyset}\Bigl(\frac{n+1}2-\lfloor u\rfloor\Bigr)\Delta_uf.
\end{align*}
If we sum the two ABC measures we get
$$
    \abc+\abc' = \sum_{u\ne\emptyset}(n-\lceil u\rceil-\lfloor u\rfloor+1)\Delta_uf.
$$

Proposition~\ref{prop:meanceiling} gave us a formula
for $\e(\lceil \pi(u)\rceil)$ where $\pi(u)$ is the image
of the set $u$ under a uniform random permutation of $1{:}n$.
By symmetry, we know that
$$
    \Pr( \lfloor \pi(u)\rfloor =\ell)=\Pr(\lceil\pi(u)\rceil = n-\ell+1)
$$
for $0\le \ell\le n+1$.
As a result $\e( \lceil u\rceil + \lfloor u\rfloor)=n+1$ from which
$$\e( \abc+\abc')=0.$$

\subsection{Monotonicity}\label{sec:monotonicity}

Here we prove a sufficient condition under which ranking variables
in decreasing order by their Shapley value gives the order that maximizes the insertion AUC.
We suppose that
$f(\bsz) = h(a(\bsz))$ where $a(\bsx)$ is an additive function
on $\{0,1\}^n$ and $h:\real\to\real$ is strictly increasing.
An additive function on $\bsz\in\{0,1\}^n$ takes the form
$$a(\bsz)=\gamma_0 + \sum_{j=1}^n\gamma_jz_j.$$
By choosing $h(w)=\sigma(w)\equiv (1+\exp(-w))^{-1}$ we can study logistic
regression probabilities, while $h(w)=w$ accounts for
those same probabilities on the logit scale.  By choosing $h(w)=\exp(w)$
we can include naive Bayes. % xxx maybe put a citation here
Taking $h(w)$ to be the leaky ReLU function we can compare the
importance of the inputs to a neuron at some position within a
network.

Logistic regression is ordinarily expressed
as $\Pr(Y=1\giv\bsx)=\sigma(\beta_0+\bsx^\tran\beta)$.
Then $\Pr(Y=1\giv\bsx')=\sigma(\beta_0+\bsx'^\tran\beta$).
    If we select $\bsz\in\{0,1\}^n$ with $z_j=1$
    indicating that we choose $x'_j$ for the $j$'th component
    and $j=0$ indicating that we choose $x_j$ for the $j$'th
    component, then
    $$f(\bsz) = \sigma\Bigl(\beta_0+\sum_{j=1}^dz_j(x_j'-x_j)\beta_j\Bigr).
    $$
    In other words we take $\gamma_j=(x_j'-x_j)\beta_j$
    and $\gamma_0=\beta_0$ to define the function on $\{0,1\}^n$
    that we study.

    The composite function is
$f(\bsz) = h(a(\bsz))$.
    We suppose without loss of generality that
$\beta_1\ge\beta_2\ge\cdots\ge\beta_n$.
    Then the Shapley values satisfy
    $$\phi_1\ge\phi_2\ge\cdots\ge\phi_n.$$
    To see this, suppose that $u\subseteq1{:}n$ with
$\ell,\ell'\not\in u$ and $\ell<\ell'$.
    Then
    $$f(e_{u\cup\ell})-f(e_{u\cup \ell'})
        = h\Bigl(\beta_0+\sum_{j\in u}\beta_j + \beta_\ell\Bigr)
        -h\Bigl(\beta_0+\sum_{j\in u}\beta_j + \beta_{\ell'}\Bigr)
        \ge 0.
    $$
    It follows that the incremental gains from adding $\ell$ to
    any set $u$ not containing $\ell$ and $\ell'$ is never smaller
    than that from adding $\ell'$ and hence $\phi_\ell\ge\phi_{\ell'}$.

    Now suppose that we arrange the variables in some
    order $\pi(j)$ where $\pi(\cdot)$ is a permutation of $1{:}n$.
    We then get an AUC of
    $$
        \auc(\pi)=\sum_{j=0}^n h\Bigl( \beta_0 + \sum_{\ell=1}^j\beta_{\pi(\ell)}\Bigr),
    $$
    where the summation over $\ell$ is zero for $j=0$.
    Now let $\pi'$ be a different permutation that swaps positions
$r$ and $r+1$ in $\pi$ where $1\le r<n$.
    It has
    \begin{align*}
        \auc(\pi') & =\sum_{j=0}^n h\Bigl( \beta_0 + \sum_{\ell=1}^j\beta_{\pi'(\ell)}\Bigr) \\
                   & =\sum_{j=0}^n
        h\Bigl( \beta_0 + \sum_{\ell=1}^{j}\bigl(
            \ind_{\ell<r}\beta_{\pi(\ell)}
            +\ind_{\ell=r}\beta_{\pi(r+1)}
            +\ind_{\ell=r+1}\beta_{\pi(r)}
            +\ind_{\ell>r+1}\beta_{\pi(\ell)}
            \bigr)
        \Bigr).
    \end{align*}
    Therefore $\auc(\pi')$ and $\auc(\pi)$
    only differ in the summand for $j=r$ and so
    \begin{align*}
        \auc(\pi')-\auc(\pi) & =
        h\Bigl( \beta_0 + \sum_{\ell=1}^{r-1}\beta_{\pi(\ell)}+\beta_{\pi(r+1)}\Bigr)
        -h\Bigl( \beta_0 + \sum_{\ell=1}^{r+1}\beta_{\pi(\ell)}\Bigr).
    \end{align*}
    Now if $\beta_{\pi(r+1)}<\beta_{\pi(r)}$ we get
$\auc(\pi')<\auc(\pi)$. As a result, any
    maximizer $\pi$ of $\auc$ must have
$\beta_{\pi(r+1)}\ge\beta_{\pi(r)}$ for all $r=1,\dots,n-1$.

    Next we consider integrated gradients for this setting,
    assuming that $h$ is differentiable with $h'>0$.
    The gradient is then
$h'(\bsz)\beta$. The gradient at any point then sorts the
inputs in the same order as the Shapley value.  Therefore
any positive linear combination of those gradient evaluations
sorts the inputs into this order which then optimizes
the deletion AUC.

\end{document}